\documentclass{article} % For LaTeX2e
\usepackage{iclr2027_conference,times}

\usepackage{amsmath,amsfonts,bm}

\def\eqref#1{equation~\ref{#1}}
\def\1{\bm{1}}

\DeclareMathAlphabet{\mathsfit}{\encodingdefault}{\sfdefault}{m}{sl}
\SetMathAlphabet{\mathsfit}{bold}{\encodingdefault}{\sfdefault}{bx}{n}

\newcommand{\E}{\mathbb{E}}

\DeclareMathOperator*{\argmax}{arg\,max}

\usepackage{hyperref}
\usepackage{url}

\usepackage{amsthm}
\usepackage{algorithm}
\usepackage{algorithmic}
\usepackage{multirow}
\usepackage{booktabs}
\usepackage{tabularx}
\usepackage{graphicx} 
\usepackage{subcaption}
\usepackage{xcolor}
\usepackage{array}
\usepackage{longtable}
\usepackage{placeins}
\usepackage{enumitem} % spaces in enum/item

\newtheorem{theorem}{Theorem}[section]
\newtheorem{proposition}{Proposition}[section]
\newtheorem*{proposition*}{Proposition} % restate theorem
\newtheorem{assumption}{Assumption}[section]
\newtheorem{lemma}{Lemma}[section]
\newtheorem{corollary}{Corollary}[section]
\newtheorem{remark}{Remark}[section]

\newif\ifarxiv
\arxivtrue
\newcommand{\pist}{\pi_{\theta}}

\newcommand{\dtv}{d_{\mathrm{TV}}}
\newcommand{\cA}{\mathcal{A}} 
\DeclareMathOperator{\supp}{supp}  

\title{Does This Action Still Explain the Task? Reverse Scoring for Diffusion Language Model Agents}

\author{%
\begin{minipage}{\textwidth}
\centering
\begin{tabular}{c@{\hspace{3em}}c@{\hspace{3em}}c}
\textbf{Jiacheng Qiu}\textsuperscript{1,2}
&
\textbf{Christopher E. Mower}\textsuperscript{1}
&
\textbf{Jan Peters}\textsuperscript{2}
\end{tabular}
\\[1.5em]
\begin{tabular}{c@{\hspace{3em}}c}
\textbf{Haitham Bou-Ammar}\textsuperscript{1,3}
&
\textbf{Matthieu Zimmer}\textsuperscript{1}
\end{tabular}
\\[1.2em]
{\normalfont\small
\textsuperscript{1} Huawei Noah's Ark Lab\\
\textsuperscript{2} Technical University of Darmstadt\\
\textsuperscript{3} UCL Centre for AI
}
\end{minipage}%
}

\ifarxiv
\iclrfinalcopy
\fi

\begin{document}

\maketitle

\ifarxiv
\lhead{Preprint}
\fi

\begin{abstract}
Diffusion-based large language models (dLLMs) promise to break the sequential latency bottleneck of autoregressive agents through parallel decoding, but recent evaluations show this efficiency does not transfer to embodied agentic competence: dLLM-backed agents repeatedly fall into retry loops, re-issuing an action long after it has failed. 
We give a mechanistic account of this failure and a training-free remedy.
We trace the retry loop to the adaptivity of masked decoding: the sampler commits the positions it is most confident about and defers the uncertain ones, and at a failure state the context already offers a confident fill for the deferred decision, i.e. the failed action itself, so the retry is committed without the failure feedback ever being confronted.
We model the resulting distortion of the action distribution as a \emph{task-blind corruption}: contextually salient actions (e.g., the action just taken) receive inflated probability by a factor that depends on the state and the action but not on the task.
Under this model, we analyse an invariance proposition: the task-blind factor cancels exactly from the \emph{reverse conditional}, i.e. the likelihood of the task given the state and a candidate action, which coincides with the task posterior of an idealized uncorrupted model.
Masked dLLMs evaluate the reverse conditional natively, unlike autoregressive models, by masking the task tokens and denoising, at the cost of a few parallel passes per candidate.
We instantiate the rule as \textbf{Reflect Reverse} and evaluate it on four multi-turn embodied benchmarks, where it improves task success and progression rates over forward-scoring baselines.
\end{abstract}

\section{Introduction}
\label{sec:intro}

An agent that samples its next action from a language model inherits the pathologies of the distribution it samples from.
Diffusion language models (dLLMs) \citep{li2022diffusionlmimprovescontrollabletext,lou2023discrete,gong2025scaling} are attractive agent backbones: they generate entire spans in a handful of parallel denoising steps, promising to break the sequential latency bottleneck of autoregressive decoding \citep{christianos2023panguagentfinetunablegeneralistagent}.
But the distribution a dLLM samples from has a distinctive, structural pathology in embodied settings: it over-weights actions that are already salient in the context, by an amount that depends on the state and the action but not on the task.
The bias is present at every decision point, however it is clearly visible on  failure states: at a state where the previous action has just failed, the model keeps proposing that same action, verbatim, as if the most re-usable prediction were whatever already sits in context \citep{holtzman2019curious}.

Why should this be structural rather than a matter of knowledge?
%\citet{ni2026flexibilitytraprethinkingvalue} identify a failure mode of masked decoding that, we argue, reappears in a sharper form in agents.
Masked decoding is \emph{adaptive}: the sampler commits the masked positions it is most confident about and defers the rest \citep{Nie2025LargeLD}, and \citet{ni2026flexibilitytraprethinkingvalue} show that the deferred positions are the ``forking'' ones where several continuations remain viable: by the time the sampler returns to them, the committed context has already resolved the choice.
At a failure state, \emph{which action now?} is such a fork, and the context already offers a confident fill for it: the failed action itself, verbatim in $s$.
The retry is thus committed without the open decision ever being sampled: the failure feedback is bridged, not confronted (Section~\ref{sec:related} develops the mechanism).
%An open decision degenerates into a retrospective alignment with a pre-determined gap, a phenomenon they term \emph{entropy degradation}.
%At a failure state, \emph{which action now?} is exactly such a fork.
%Our agent thinks before acting, i.e. the span to be denoised is a chain of thought followed by the action, but the thinking does not confront the fork: every position, in the reasoning or in the action, can be deferred until the committed context makes it confident, and for the action positions the context already contains a confident fill, the failed action itself, every token of which sits verbatim in $s$.
%In reasoning, the cost of this trap is lost solution coverage; in agents, it is the retry loop.
%The mechanism does not hinge on parallel decoding: it persists whenever the sampler chooses \emph{which} position to commit by confidence, even at one token per step, whereas an autoregressive model has no such choice and must decide the first token of the new action while the failure feedback is the newest evidence.
Resampling at a stuck state shows the knowledge is there: the candidates include plausible alternatives, yet the distribution still concentrates on the contextually salient one.
The bias does not care which task is being attempted.

\begin{figure*}[t]
    \centering
    \includegraphics[width=1.0\textwidth]{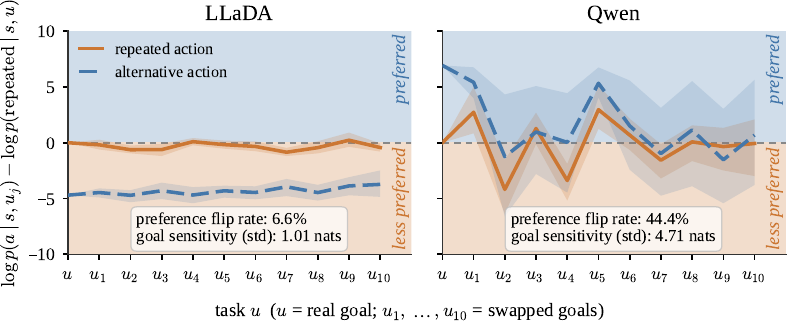}
    \caption{Task-blind corruption analyzed over 208 failed states in ALFWorld.
    }
    \label{fig:corruption}
%1. The repeated (failed) action is essentially task-blind. Its log-probability moves by <~1 nat under any goal swap, in LLaDA — the agent keeps committing to the action that already failed, regardless of what the goal becomes.
%2. The models differ sharply in goal sensitivity. LLaDA flips its preferred action in only 5.8% of state×swap cells (σ = 1.39 nats); Qwen flips 38.7% (σ = 4.97 nats). So task-blind commitment to the failed action is much stronger for LLaDA.
%3. LLaDA barely reweights at all: its alternative (goal-directed) action sits ~5–6 nats below the repeated baseline and stays flat — the model is "stuck" on the repeated action.
%4. Qwen does reweight, but erratically: the alternative swings from −3.5 to +5.9 nats, with a wide band reflecting genuine between-episode heterogeneity (some episodes swing −13, others +4).
\end{figure*}

We make this precise and turn it into a correction with a provable guarantee.
Let $u$ denote the task, $s$ the interaction history, and $a$ a candidate next action. 
Write $\pist(a\mid s,u)$ for the observable model conditional, and $\mu(a\mid s,u)$ for the \emph{faithful reference} conditional: the distribution the model would produce if it had absorbed $s$ correctly. 
Our hypothesis is that the gap between the two is a multiplicative distortion that does not depend on the task (Assumption~\ref{ass:corruption}): the same state--action inflation must explain the model's behavior under every goal.
To verify it empirically, we record the log-probability of the retried ("repeated") and goal-directed ("alternative") actions under the real goal $u$ and ten swapped goals $u_1,\dots,u_{10}$, and plot the mean change relative to the real-task level (Figure~\ref{fig:corruption}). Full details of the task-corruption
procedure are provided in Appendix~\ref{app:taskcorruption}. For both actions, LLaDA \citep{Nie2025LargeLD} is nearly insensitive to goal swaps, and always prefers the failed repeated action, in contrary to the faithful reference represented by Qwen here.

If the corruption is task-blind, it should be invisible to any score that compares candidates \emph{by their task evidence alone}. 
This suggests inverting the question. 
The standard rule asks the model the forward question: \emph{which action for this task?} --- and the corruption survives, because it inflates an action's probability independently of the task. 
We instead ask the reverse question: \emph{does the state, together with this candidate action, still explain the task?}
Every candidate is then scored on the task evidence for it, $\pist(u\mid s,a)$; under the coherence idealization of Section~\ref{sec:corruption}, a factor that carries no task information cannot move this score. 
For a masked dLLM the reverse question is a native operation unavailable easily in autoregressive models: mask the task tokens, make the state and the candidate action visible and denoise. 

Our main theoretical analysis makes this precise (Proposition~\ref{thm:main}): under the corruption and coherence assumptions (Section~\ref{sec:corruption}), the reverse score $\pist(u\mid s,a)$ equals, for every action, the faithful model's task posterior under a task prior that the corruption reweights.
The posterior is proportional to a \emph{task-evidence ratio} $\mu(a\mid s,u)\big/m(a)$: the faithful likelihood of the action under the actual task, divided by its likelihood $m(a)$ under a mixture of tasks.
The corruption cannot survive this comparison: it varies across actions, which is how it distorts the forward ranking, but at each action it is constant across tasks, which is why it cancels from a posterior over tasks.

We instantiate our method as \textbf{Reflect Reverse}: sample candidates with chain of thoughts, score each by the reverse conditional, sample the next action proportionally to the scores. 
On four multi-turn embodied benchmarks and two dLLMs, Reflect Reverse reduces retry-loops and improves task success and progression rates over forward-scoring baselines.

\paragraph{Contributions.}
\begin{itemize}[leftmargin=1.5em, topsep=0pt, itemsep=0pt]
\item A task-blind corruption hypothesis for dLLM action distributions, stated formally and empirically verified (Section~\ref{sec:intro}, Figure~\ref{fig:corruption}).
\item A recovery proposition: the reverse score equals the faithful model's task posterior (Proposition~\ref{thm:main}); we also analyze the reranking's residual bias  (Theorem~\ref{thm:bias}).
\item Reflect Reverse, a training-free action-selection rule that scores candidates by the model's reverse conditional, requiring only a few forward passes per candidate (Section~\ref{sec:method}).
\item Experiments on ALFWorld, ScienceWorld, BabyAI, and Jericho showing improved success rates and progression rates with LLaDA and iLLaDA (Section~\ref{sec:experiments}).
\end{itemize}

\section{Background}
\label{sec:background}

\subsection{Masked diffusion language models}
\label{sec:dllm}

Masked dLLMs \citep{Nie2025LargeLD,austin2021structured,sahoo2024simple,shi2024simplified} define a forward process that progressively masks tokens of a sequence $x=(x_1,\dots,x_n)$ and a reverse process that restores them. At each reverse step, a Transformer with bidirectional attention observes the currently unmasked tokens $x^{o}$ and predicts every masked position simultaneously $
p_\theta(x^{m}\mid x^{o}) \;=\; \prod_{j\in m} p_\theta(x_j\mid x^{o}),$
i.e., the per-step joint over masked tokens is \emph{factorized} given the context. 
Any span of a sequence can be scored against any other span by masking the first and clamping the second, in multiple denoising steps; there is no privileged direction of conditioning \citep{schiff2025simple}. 

\subsection{Task-blind corruption}
\label{sec:corruption}

%Fix a state $s$ and task $u$. 
We model the observed conditional as a faithful conditional distorted by a state--action factor.

\begin{assumption}[Task-blind multiplicative corruption]
\label{ass:corruption}
There exists a function $\varepsilon(s,a)$, depending on the state and the
action but \emph{not} on the task, such that for every task $u$ in the
support of $p(\cdot\mid s)$:
\begin{equation}
\label{eq:corruption}
\pist(a\mid s,u) \;=\; \frac{\mu(a\mid s,u)\,e^{\varepsilon(s,a)}}{Z(s,u)},
\qquad
Z(s,u):=\sum_{a'}\mu(a'\mid s,u)\,e^{\varepsilon(s,a')}.
\end{equation}
\end{assumption}
The content of the assumption is the clause that $\varepsilon$ does not depend on the task: for any two conditionals one can always fit some $\varepsilon$. Figure~\ref{fig:corruption} tests it: at fixed states, the log-ratio $\log\pist(a\mid s,u)-\log\mu$-proxy varies with $(s,a)$ but is flat in $u$.
The factor $\varepsilon(s,a)$ is the structural bias itself: an action whose tokens are salient in $s$ is inflated by the same amount whatever the goal.

\section{Method}
\label{sec:method}

\subsection{Reflect Reverse}
\label{sec:method-algo}

Standard decoding samples an action from $\pist(a\mid s,u)$, i.e. the forward question, but inherits the corruption of Assumption~\ref{ass:corruption}. 
Reflect Reverse inverts the direction of conditioning:
\begin{enumerate}[leftmargin=1.5em, topsep=0pt, itemsep=0pt]
\item \textbf{Propose.} Sample $K$ candidate actions $a_1,\dots,a_K\sim\pist(\cdot\mid s,u)$ with chain of thoughts \citep{wei2022chain} and deduplicate, yielding a candidate set $\mathcal{A}(s)$.

\item \textbf{Reflect.} Score each candidate by the reverse conditional,
estimated by Monte Carlo rollouts \citep{gulrajani2023likelihood,ou2025your}: provide $s$ and $a$, sample $M$ random masks of the task tokens, denoise, and read off the task-token log-likelihoods at the masked positions:
\begin{equation}
\label{eq:score}
\phi(a)
\;:=\;
\frac{1}{M}\sum_{m=1}^{M}\frac{1}{|B_m|}
\sum_{j\in B_m}\log\pist(u_j\mid s,a,\tilde u^{(m)}),
\end{equation}
where $B_m\subseteq\{1,\dots,|u|\}$ is the random subset of task-token positions masked in rollout $m$ and $\tilde u^{(m)}$ is the masked task-token. 
Each term of Eq.\ref{eq:score} is a masked-estimate of the task log-likelihood $\log\pist(u\mid s,a)$, so $\phi(a)$ is a Monte Carlo estimate of the same reverse score.

\item \textbf{Act.} Sample the next action from the corrected policy
\begin{equation}
\label{eq:policy}
\rho(a\mid s,u)
\;:=\;
\frac{\exp(\beta\,\phi(a))}
{\sum_{a'\in\mathcal{A}(s)}\exp(\beta\,\phi(a'))},
\qquad a\in\mathcal{A}(s),
\end{equation}
with inverse temperature $\beta>0$.
\end{enumerate}

The whole procedure is training-free and adds $|\mathcal{A}(s)| \times M$ denoising passes per decision.
Pseudocode is in Appendix \ref{app:pseudo}.

\subsection{Invariance of the reverse score to task-blind corruption}
\label{sec:method-theory}

The reverse score is not an arbitrary heuristic: under the corruption model of Section~\ref{sec:corruption}, it equals the faithful model $\mu$'s own verdict.

\begin{proposition}[Recovery of the faithful task posterior]
\label{thm:main}
Let Assumptions~\ref{ass:corruption} and \ref{ass:coherence} hold. Then, for every state $s$, task $u$ in the support of $p(\cdot\mid s)$, and action $a$,
\begin{equation}
\label{eq:master}
\pist(u\mid s,a)
\;=\;
\frac{\omega(u\mid s)\,\mu(a\mid s,u)}
{\displaystyle\sum_{u'}\omega(u'\mid s)\,\mu(a\mid s,u')},
\end{equation}
where $\omega(u\mid s):=p(u\mid s)/Z(s,u)$ is a positive task weight that does not depend on the  action $a$.
\end{proposition}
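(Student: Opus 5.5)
The proof is a Bayes inversion followed by a cancellation argument. Assumption~\ref{ass:coherence} is referenced but not shown; I take it to say that the model's conditionals are coherent with a single joint over $(u,a)$ given $s$ whose task marginal is $p(u\mid s)$. In other words, $\pist(u\mid s,a)\propto p(u\mid s)\,\pist(a\mid s,u)$, the two directions of conditioning being related by Bayes' rule. Under this reading, the first step is to write, for every task $u$ in the support of $p(\cdot\mid s)$,
\[
\pist(u\mid s,a)=\frac{p(u\mid s)\,\pist(a\mid s,u)}{\sum_{u'}p(u'\mid s)\,\pist(a\mid s,u')}.
\]
This step depends entirely on the coherence assumption. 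It is also the main obstacle, since a masked dLLM's reverse and forward conditionals need not come from one joint. The proposition therefore lives or dies on exactly how Assumption~\ref{ass:coherence} is phrased. If it is stated directly as a Bayes identity, I invoke it verbatim. If instead it posits a joint $\pist(u,a\mid s)$ with task marginal $p(u\mid s)$, I first derive the display above by dividing the joint by its action marginal.

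The second step substitutes Assumption~\ref{ass:corruption} into both numerator and denominator:
\[
\pist(a\mid s,u')=\mu(a\mid s,u')\,e^{\varepsilon(s,a)}/Z(s,u').
\]
This covers every $u'$ in the support, and only such $u'$ carry nonzero weight $p(u'\mid s)$ in the sum, so the assumption applies to every term. The factor $e^{\varepsilon(s,a)}$ depends on $s$ and $a$ but not on $u'$, so it pulls out of the sum over $u'$ and cancels against the same factor in the numerator. This cancellation is the key point and uses precisely the task-blindness clause. What remains is
\[
\frac{p(u\mid s)\,\mu(a\mid s,u)/Z(s,u)}{\sum_{u'}p(u'\mid s)\,\mu(a\mid s,u')/Z(s,u')}.
\]
With $\omega(u\mid s):=p(u\mid s)/Z(s,u)$ this is exactly equation~\ref{eq:master}.

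The last step checks the claimed properties of $\omega$. First, $Z(s,u)=\sum_{a'}\mu(a'\mid s,u)e^{\varepsilon(s,a')}$ is a sum over actions, so it does not depend on the queried $a$. Second, $Z(s,u)>0$ because the exponentials are positive and $\mu(\cdot\mid s,u)$ is a probability distribution. Together with $p(u\mid s)>0$ on the support, this gives $\omega(u\mid s)>0$. Finally, I confirm the denominator of equation~\ref{eq:master} is nonzero: this holds whenever $\mu(a\mid s,u')>0$ for some support task, which is implied whenever $\pist(u\mid s,a)$ is defined at all, since otherwise $a$ would have zero probability under the joint.
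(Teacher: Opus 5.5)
Your proposal is correct and follows the paper's proof: Assumption~\ref{ass:coherence} is indeed stated directly as the Bayes identity, and substituting Assumption~\ref{ass:corruption} into numerator and denominator lets the task-free factor $e^{\varepsilon(s,a)}$ come out of the sum over $u'$ and cancel, leaving $\omega(u\mid s)=p(u\mid s)/Z(s,u)$. Your extra checks ($\omega>0$, $Z$ independent of $a$, non-vanishing denominator, and restricting the corruption assumption to support tasks) are details the paper leaves implicit.
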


The proof is in Appendix~\ref{app:theory_prop}.
The cancellation is the pointwise mutual information (PMI) invariance identified by \citet{holtzman2021surface}.
What is new here is not the identity but its agentic instantiation.
The corruption is not an input-level bias corrected by an external signal \citep{holtzman2021surface,chae2024mitigating} but a decoding-level mechanism.

\paragraph{An exact score for comparing candidates.}
Identity Eq.\ref{eq:master} holds pointwise in $a$, so the pairwise log-margins of the score $\pist(u\mid s,\cdot)$ coincide with those of the faithful task-evidence ratio $\mu(a\mid s,u)\big/\sum_{u'}\omega(u'\mid s)\,\mu(a\mid s,u')$: the score orders and ties candidates exactly as the ratio does.
The remaining factor $\omega(u\mid s)$ is shared by all candidates, so softmax normalization removes it at any temperature (Corollary~\ref{cor:policy-app}): \textbf{the corrected policy $\rho$ is exactly the softmax of the faithful task-evidence ratio over the candidate set, at any $\beta$.}

\paragraph{What the guarantee does and does not cover.}
The guarantee governs the comparison among the candidates on the table, not the table itself: the proposal step still draws candidates from the corrupted $\pist(\cdot\mid s,u)$, so the support $\mathcal{A}(s)$ need not coincide with the faithful model's support.
Moreover, $\rho$ matches the faithful \emph{task-evidence ratio}, not the normalized faithful conditional: writing $m(a):=\sum_{u'}\omega(u'\mid s)\,\mu(a\mid s,u')$ for the task-marginal likelihood of $a$, we have $\rho(a)\propto\mu(a\mid s,u)^{\beta}\big/m(a)^{\beta}$, and the factor $m(a)$ depends on the candidate and is not removed by normalization.
$\rho$ therefore coincides with sampling from $\mu(\cdot\mid s,u)$ restricted to $\mathcal{A}(s)$ only when the marginal is flat across the candidate set (Remark~\ref{rem:ratio-vs-conditional}); what holds unconditionally is the exact recovery of the ratio's ordering and ties.

\paragraph{Why the forward score cannot say this.}
Raw probability has no such invariance: $\pist(a\mid s,u)\propto\mu(a\mid s,u)\,e^{\varepsilon(s,a)}$ reorders freely as $\varepsilon$ varies across candidates.
The reverse question is precisely the one in which a task-blind factor is invisible under our assumptions.
%
%\paragraph{The retry loop, in one line.}
At a failure state, the retried action carries a large $\varepsilon(s,a)$, because it is already in context, so it wins the forward comparison; but its faithful task evidence is low, because it just failed, so it loses the reverse comparison.
%The cancellation removes exactly the mechanism that produces the retry.

\subsection{How large is the residual bias?}
\label{sec:method-bias}

How far is $\rho$ from the faithful reference $\mu(\cdot\mid s,u)$ on the candidate set, and how does that residual bias compare with the bias of forward selection from the corrupted $\pist(\cdot\mid s,u)$ on the same set?
Theorem~\ref{thm:bias} below answers both questions in total variation, $\dtv(P,Q):=\tfrac12\sum_{a}|P(a)-Q(a)|$; the proof, and the sharp range lemma it rests on, are in Appendix~\ref{app:bias}.

Fix a state $s$, a task $u$ in the support of $p(\cdot\mid s)$, and a finite candidate set $\cA\subseteq\supp\mu(\cdot\mid s,u)$, abbreviating $\cA=\mathcal{A}(s)$ when the candidate set is the proposed one.
Write $\mu_{\cA}$ and $\pi_{\cA}$ for the faithful and corrupted conditionals renormalized to $\cA$
and measure the heterogeneity of the candidates through the log-spreads:
\begin{align}
\Lambda_\mu
&\;:=\;
\max_{a\in\cA}\log\mu(a\mid s,u)-\min_{a\in\cA}\log\mu(a\mid s,u),
\label{eq:lam-mu}\\
\Lambda_m
&\;:=\;
\max_{a\in\cA}\log m(a)-\min_{a\in\cA}\log m(a),
\label{eq:lam-m}\\
\Lambda_{\mathrm{task}}
&\;:=\;
\sup_{u'\in\supp p(\cdot\mid s)}
\Bigl(\max_{a\in\cA}\log\mu(a\mid s,u')-\min_{a\in\cA}\log\mu(a\mid s,u')\Bigr)
\;\in\;[0,+\infty],
\label{eq:lam-task}
\end{align}
where the supremum runs over tasks $u'$ under which $\mu(\cdot\mid s,u')$ is not identically zero on $\cA$, with the convention $\log 0=-\infty$.
The spreads $\Lambda_\mu$ and $\Lambda_m$ are always finite, and $\Lambda_{\mathrm{task}}$ is finite in particular whenever no candidate is ruled out by any task in the support.
%Of the three, $\Lambda_\mu$ and $\Lambda_{\mathrm{task}}$ are properties of the faithful model alone; $\Lambda_m$ depends on the corruption only through the task weights $\omega$.

\begin{theorem}[Reverse selection is bounded by the faithful spread; forward selection by the corruption]
\label{thm:bias}
Fix a state $s$, a task $u$ in the support of $p(\cdot\mid s)$, and a finite candidate set $\cA\subseteq\supp\mu(\cdot\mid s,u)$ with $|\cA|\ge2$.
Let Assumptions~\ref{ass:corruption} and \ref{ass:coherence} hold and let $\beta>0$.
Let $a^{+}:=\argmax_{a\in\cA}\varepsilon(s,a)$ be the most inflated candidate, $\gamma:=\mu_{\cA}(a^{+})$ its faithful mass, and
\begin{equation}
\label{eq:delta2}
\Delta_2
\;:=\;
\varepsilon(s,a^{+})-\max_{a\in\cA\setminus\{a^{+}\}}\varepsilon(s,a)
\;\ge\;0
\end{equation}
the inflation gap between the most inflated candidate and the rest.
Then:
\begin{enumerate}[leftmargin=1.5em, topsep=0pt, itemsep=0pt]
\item[(i)] \textbf{The reverse bias is bounded independently of the corruption.}
\begin{equation}
\label{eq:revbound}
\dtv(\rho,\mu_{\cA})
\;\le\;
\tanh\!\Bigl(\frac{|\beta-1|\,\Lambda_\mu+\beta\Lambda_m}{4}\Bigr)
\;\le\;
\tanh\!\Bigl(\frac{|\beta-1|\,\Lambda_\mu+\beta\Lambda_{\mathrm{task}}}{4}\Bigr),
\end{equation}
and the right-most bound does not involve $\varepsilon$ at all; at $\beta=1$ it reads $\dtv(\rho,\mu_{\cA})\le\tanh(\Lambda_m/4)\le\tanh(\Lambda_{\mathrm{task}}/4)$.
\item[(ii)] \textbf{The forward bias grows with the corruption.}
\begin{equation}
\label{eq:fwdbound}
\dtv(\mu_{\cA},\pi_{\cA})
\;\ge\;
\frac{\gamma(1-\gamma)\bigl(1-e^{-\Delta_2}\bigr)}
{\gamma+(1-\gamma)e^{-\Delta_2}},
\end{equation}
with equality when $|\cA|=2$; the right-hand side is increasing in $\Delta_2$ and tends to $1-\gamma$ as $\Delta_2\to\infty$.
\end{enumerate}
\end{theorem}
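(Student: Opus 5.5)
The plan is to reduce both parts to elementary facts about two distributions on the finite set $\cA$: part (i) to a bound on the spread of their log-ratio, part (ii) to a one-point lower bound. For (i) the backbone is a \emph{range lemma}. If $P,Q$ are probability distributions on a finite set with $Q>0$, and $\log(P/Q)$ takes values in an interval of width $L$, then $\dtv(P,Q)\le\tanh(L/4)$.

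I would prove the lemma first, since it is the main obstacle. Write $r=P/Q\in[c,ce^{L}]$ with $\sum_a Q(a)r(a)=1$, so that $\dtv=\sum_a Q(a)(r(a)-1)_+$. This is a linear functional of the distribution of $r$ under $Q$, maximized subject to a mean constraint and a support constraint. The extremum is therefore attained by a two-point law placing mass $q$ at $ce^{L}$ and mass $1-q$ at $c$. The mean constraint gives $q=(1-c)/(c(e^{L}-1))$, and then $\dtv=q(ce^{L}-1)$. Optimizing over $c$ should give $c=e^{-L/2}$, i.e. the two values are symmetric about $1$ in log scale. This yields $\dtv=(e^{L/2}-1)/(e^{L/2}+1)=\tanh(L/4)$. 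If that final calculus step is messy, I would instead use the symmetric parametrization $c=e^{-L/2}$ directly and verify optimality by a convexity argument.

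With the lemma in hand, (i) goes as follows. By Proposition~\ref{thm:main}, treating $\phi(a)$ as the exact reverse score $\log\pist(u\mid s,a)$, we have $\exp(\beta\phi(a))\propto\mu(a\mid s,u)^{\beta}/m(a)^{\beta}$, because the factor $\omega(u\mid s)$ is common to all candidates. Hence, up to an additive constant,
\[
\log\rho(a)-\log\mu_{\cA}(a)=(\beta-1)\log\mu(a\mid s,u)-\beta\log m(a).
\]
The range of the first term over $\cA$ is at most $|\beta-1|\Lambda_\mu$ and the range of the second is at most $\beta\Lambda_m$, so the lemma gives the first inequality.

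For $\Lambda_m\le\Lambda_{\mathrm{task}}$, I write $m(a)=\sum_{u'}\omega(u'\mid s)\mu(a\mid s,u')$ and split the tasks into two groups:
\begin{enumerate}[leftmargin=1.5em, topsep=0pt, itemsep=0pt]
\item Tasks with $\mu(\cdot\mid s,u')\equiv0$ on $\cA$ contribute nothing to $m(a)$ for any $a\in\cA$.
\item For every other task, $\mu(a\mid s,u')\le e^{\Lambda_{\mathrm{task}}}\mu(b\mid s,u')$ for all $a,b\in\cA$.
\end{enumerate}
Summing over tasks with the weights $\omega$ gives $m(a)\le e^{\Lambda_{\mathrm{task}}}m(b)$. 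We also have $m(b)>0$, because $b\in\supp\mu(\cdot\mid s,u)$ and $\omega(u\mid s)>0$. The bound is trivial if $\Lambda_{\mathrm{task}}=\infty$. Setting $\beta=1$ gives the special case.

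For (ii), Assumption~\ref{ass:corruption} gives $\pi_{\cA}(a)\propto\mu_{\cA}(a)e^{\varepsilon(s,a)}$. Since $\dtv\ge|\pi_{\cA}(a^{+})-\mu_{\cA}(a^{+})|$, it suffices to bound
\[
\pi_{\cA}(a^{+})=\frac{\gamma}{\gamma+\sum_{a\neq a^{+}}\mu_{\cA}(a)e^{\varepsilon(s,a)-\varepsilon(s,a^{+})}}\ge\frac{\gamma}{\gamma+(1-\gamma)e^{-\Delta_2}}.
\]
Subtracting $\gamma$ gives exactly the stated expression. For $|\cA|=2$ the TV distance between two distributions equals the discrepancy at a single point, and the displayed inequality is then an equality, so the bound is attained. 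Monotonicity follows by writing $x=e^{-\Delta_2}$: the numerator $1-x$ decreases in $x$ and the denominator $\gamma+(1-\gamma)x$ increases in $x$, so the bound is increasing in $\Delta_2$. Setting $x\to0$ gives the limit $1-\gamma$.
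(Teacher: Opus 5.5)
Your proposal is correct and follows the paper's proof essentially step for step. For (i) you use the same log-ratio decomposition $(\beta-1)\log\mu-\beta\log m$ plus a constant, the same range lemma, and the same bound $\Lambda_m\le\Lambda_{\mathrm{task}}$; your termwise summation is the paper's mediant inequality in another form. For (ii) you use the identical one-point argument on $\{a^{+}\}$.

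The only real difference is that you derive the range lemma yourself, where the paper cites Binette's Corollary~5 and then optimizes the location of the band. Your two-point value $(1-c)(ce^{L}-1)/\bigl(c(e^{L}-1)\bigr)$ is precisely that cited bound. One justification needs stating: placing the mass at the endpoints $c$ and $ce^{L}$ follows from convexity of $x\mapsto(x-1)_+$ (the chord bound), not from linearity of the functional alone.
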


Part~(i) controls the residual bias of the corrected policy by the spread of the \emph{faithful} log-probabilities over the candidate set: at $\beta=1$ the bound is $\tanh(\Lambda_{\mathrm{task}}/4)$, a quantity of the uncorrupted model, whatever the size of $\varepsilon$.
Part~(ii) shows that the forward-selection bias is a function of the corruption itself: it grows with the inflation gap $\Delta_2$ between the most inflated candidate and the rest, toward $1-\gamma$.
Where the corruption is flat, forward selection is exact and the corrected policy retains only the $m$-bias; the retry loop is the regime in which the two are farthest apart, with forward selection's bias growing while the corrected policy's stays capped by a corruption-free quantity.

\section{Experiments}
\label{sec:experiments}
\subsection{Experimental setup}
\label{sec:experimental_setup}

\paragraph{Benchmarks.}
We evaluate our proposed method on four multi-turn agent benchmarks
spanning diverse task domains: ALFWorld~\citep{Shridhar2020ALFWorldAT}
for household task execution,
ScienceWorld~\citep{Wang2022ScienceWorldIY} for interactive scientific
reasoning, BabyAI~\citep{ChevalierBoisvert2018BabyAIAP} for instruction
following in gridworld environments, and
Jericho~\citep{Hausknecht2019InteractiveFG} for text-based adventure
games.

\paragraph{Backbone models and inference.}
We use LLaDA-8B-Instruct~\citep{Nie2025LargeLD} and
iLLaDA-8B-Instruct~\citep{Nie2026ImprovedLL} as the backbone models
for our dLLM-based agents. iLLaDA follows LLaDA's masked diffusion
formulation and is trained from scratch with larger-scale
pre-training and supervised fine-tuning. For efficient inference,
we use the Fast-dLLM~\citep{Wu2025FastdLLMTA} implementation for
LLaDA and adapt its block-wise approximate key--value (KV) caching
and parallel decoding to iLLaDA.

\paragraph{Baseline.}
We compare our proposed method, \textbf{Reflect Reverse}, with \textbf{One Pass}.
Following the action selection procedure of \citet{Lu2026TheBL},
this baseline executes the first valid action extracted from
the generated output.

\paragraph{Evaluation protocol.}
We evaluate both methods at two sampling temperatures
$T \in \{0.8, 0.9\}$ for LLaDA and $T \in \{0.9, 1.0\}$ for iLLaDA, 
with three random seeds 12, 22 and 32. Each evaluation episode is limited to
30 interaction steps and terminates early if the task is completed.
At each step, we generate 20 candidate actions and apply the
corresponding action-selection method. Generated actions are mapped
to valid environment actions through similarity-based matching,
with a match accepted only when its similarity score exceeds 0.5. Complete hyperparameters for the experiments are summarized in Appendix \ref{app:hyperparameters}.

\paragraph{Evaluation metrics.}
For each benchmark, we report success rate (SR) and progress rate
(PR) \citep{ma2024agentboard} to capture complementary aspects of agent performance.
Success rate measures the fraction of evaluation episodes in which
the agent completes the task within the interaction budget.
Progress rate measures advancement toward the task objective,
averaged across evaluation episodes.

\subsection{Main results}
Figure~\ref{fig:LLaDA-iLLaDA-score} compares \textbf{Reflect Reverse} with \textbf{One Pass} across the four benchmarks and two backbone models. \textbf{Reflect Reverse} achieves higher mean SR in 13 of the 16 evaluation groups and higher mean PR in 14. In particular, SR improves on ALFWorld, ScienceWorld, and BabyAI for both backbones at both evaluated temperatures, while PR improves in all eight configurations for iLLaDA.

Notably, SR improves on ALFWorld, ScienceWorld, and BabyAI for both backbones at both evaluated temperatures.
For iLLaDA, PR also improves across all four benchmarks at both temperatures, demonstrating consistent gains in
advancement toward task objectives.

The improvements are substantial in several settings. For LLaDA, the largest absolute SR gain occurs on ALFWorld at $T=0.9$, where SR increases from 8.71\% to 16.17\%, accompanied by an increase in PR from 29.33\% to 36.24\%. For iLLaDA, the largest absolute SR gain occurs on BabyAI at $T=1.0$, where SR rises from 11.61\% to 20.24\% and PR increases from 25.13\% to 35.40\%.

The benefits also extend to partial progress on Jericho, where PR improves for both backbones at both temperatures, although SR decreases at the lower temperature evaluated for each backbone. Across all configurations, the only PR decreases occur in two LLaDA settings and are smaller than
one percentage point in both cases. Taken together, these results support reverse reflection
as an effective approach to improving multi-turn agent performance, with gains spanning both backbone models
and diverse task domains.
\begin{figure*}[h]
  \centering

  \begin{subfigure}{\linewidth}
    \centering
    \includegraphics[width=\linewidth]{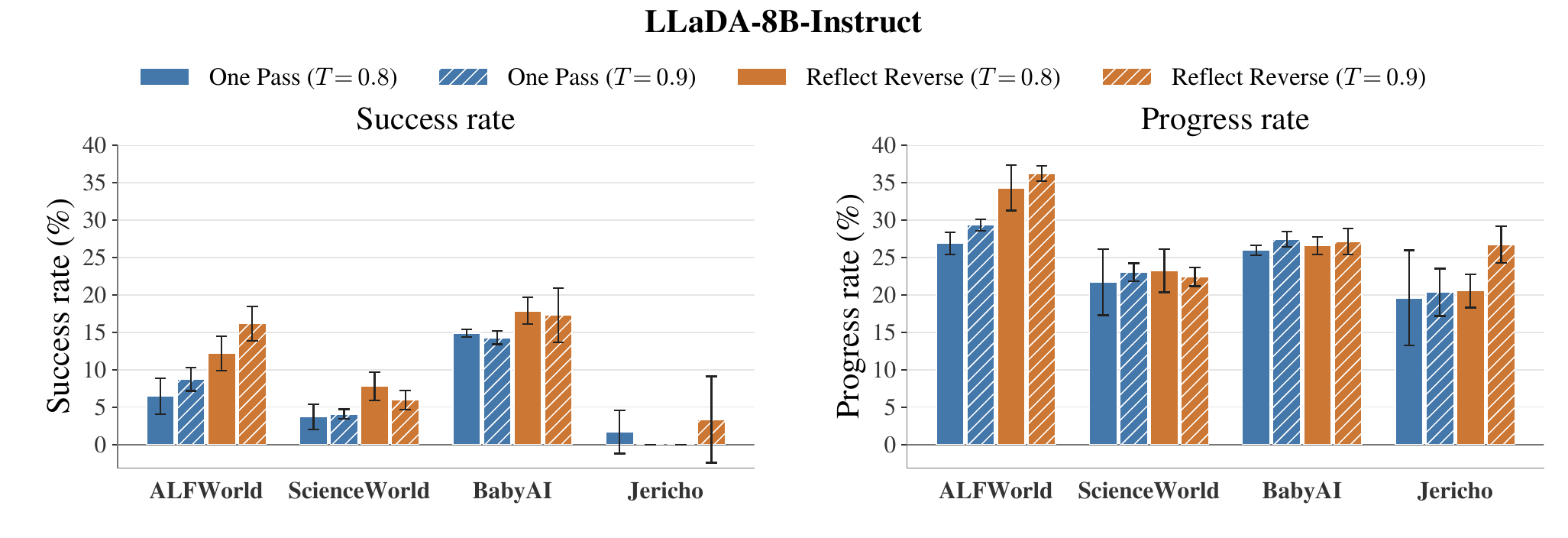}
    % \caption{LLaDA}
    \label{fig:LLaDA-score}
  \end{subfigure}
  \vspace{-2.0em}
  \begin{subfigure}{\linewidth}
    \centering
    \includegraphics[width=\linewidth]{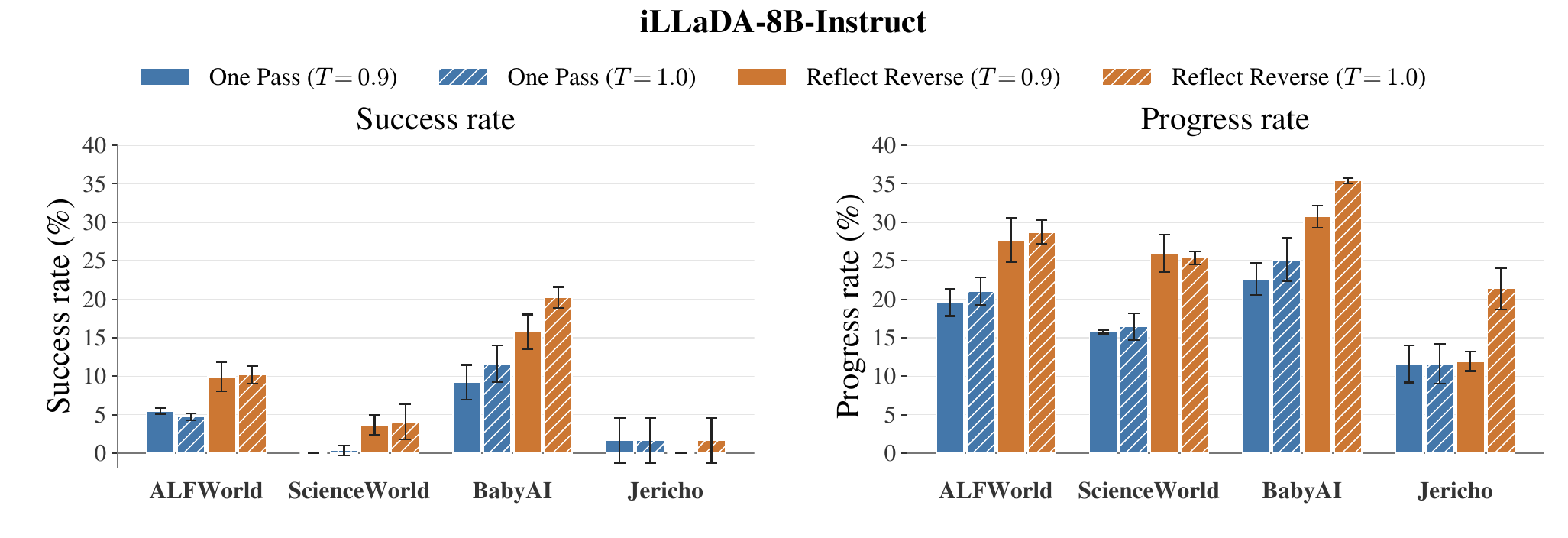}
    % \caption{iLLaDA}
    \label{fig:iLLaDA-score}
  \end{subfigure}
  \vspace{-0.5em}
  \caption {Success rate and progress rate for LLaDA-8B-Instruct and iLLaDA-8B-Instruct.}
  \label{fig:LLaDA-iLLaDA-score}
\end{figure*}

% The second follows the same procedure as \textbf{Reflect Reverse}, but selects actions using weighted scores computed over the candidate actions rather than the task description \textbf{Reflect Forward} \textcolor{red}{add equation and method name here}.

\subsection{Ablations}
\subsubsection{Rerank on forward score}
To evaluate the contribution of reverse conditional scoring, we compare \textbf{Reflect Reverse} with
\textbf{Reflect Forward}. Rather than masking task description tokens $u$ as in~Eq.\ref{eq:score},
\textbf{Reflect Forward} masks candidate action tokens $a$ and scores their reconstruction conditioned on $s$, $u$, and the unmasked action tokens:
\begin{equation}
\label{eq:forward_score}
\psi(a) := \frac{1}{M}\sum_{m=1}^{M}\frac{1}{|B_m|} \sum_{j\in B_m} \log\pist(a_j\mid s,u,\tilde a^{(m)}),
\end{equation}
where $B_m \subseteq \{1,\dots,|a|\}$ is a randomly sampled, nonempty set of masked positions, and $\tilde a^{(m)}$ denotes the candidate action with the tokens at those positions masked. The Act step retains the same softmax action-selection policy, replacing $\phi(a)$ with $\psi(a)$ when sampling the next action.

\begin{table*}[!htp]
\centering
\caption{
Comparison of success rate (SR) and progress rate (PR) for
\textbf{Reflect Reverse} and \textbf{Reflect Forward}, averaged across the two
sampling temperatures. Reverse and Forward results are reported as
mean $\pm$ standard deviation (\%). The relative change is computed as
$\Delta=(\text{Reverse}-\text{Forward})/\text{Forward}\times100\%$.
Positive relative changes are shown in bold. When the Forward value is zero,
the relative change is undefined and reported as N/A.
}
\label{tab:reflect-forward-reverse-mean}

\footnotesize
\setlength{\tabcolsep}{2.5pt} % reduced from 4pt
\renewcommand{\arraystretch}{1.08}

\begin{tabular}{@{}llccc@{\hspace{3pt}}ccc@{}}
\toprule
\textbf{Model}
& \textbf{Benchmark}
& \multicolumn{3}{c}{\textbf{Success Rate (SR)}}
& \multicolumn{3}{c}{\textbf{Progress Rate (PR)}} \\
\cmidrule(lr){3-5}
\cmidrule(lr){6-8}
&
& \textbf{Reverse}
& \textbf{Forward}
& $\boldsymbol{\Delta}$ (\%)
& \textbf{Reverse}
& \textbf{Forward}
& $\boldsymbol{\Delta}$ (\%) \\
\midrule

LLaDA
& ALFWorld
& $14.18 \pm 2.99$
& $8.21 \pm 1.33$
& $\mathbf{+72.72}$
& $35.26 \pm 2.29$
& $30.59 \pm 1.79$
& $\mathbf{+15.27}$ \\
\cmidrule(lr){2-8}

& ScienceWorld
& $6.85 \pm 1.78$
& $4.63 \pm 1.78$
& $\mathbf{+47.95}$
& $22.82 \pm 2.06$
& $21.15 \pm 2.43$
& $\mathbf{+7.90}$ \\
\cmidrule(lr){2-8}

& BabyAI
& $17.56 \pm 2.57$
& $17.56 \pm 3.32$
& $0.00$
& $26.83 \pm 1.35$
& $29.40 \pm 2.35$
& $-8.74$ \\
\cmidrule(lr){2-8}

& Jericho
& $1.67 \pm 4.08$
& $1.67 \pm 2.58$
& $0.00$
& $23.63 \pm 3.99$
& $19.23 \pm 2.61$
& $\mathbf{+22.88}$ \\

\midrule

iLLaDA
& ALFWorld
& $10.07 \pm 1.40$
& $0.75 \pm 0.94$
& $\mathbf{+1242.67}$
& $28.20 \pm 2.13$
& $16.08 \pm 1.56$
& $\mathbf{+75.37}$ \\
\cmidrule(lr){2-8}

& ScienceWorld
& $3.89 \pm 1.69$
& $0.19 \pm 0.45$
& $\mathbf{+1947.37}$
& $25.67 \pm 1.66$
& $8.76 \pm 1.68$
& $\mathbf{+193.04}$ \\
\cmidrule(lr){2-8}

& BabyAI
& $18.01 \pm 2.96$
& $8.48 \pm 1.23$
& $\mathbf{+112.38}$
& $33.07 \pm 2.73$
& $19.98 \pm 2.42$
& $\mathbf{+65.52}$ \\
\cmidrule(lr){2-8}

& Jericho
& $0.83 \pm 2.04$
& $0.00 \pm 0.00$
& \text{N/A}
& $16.65 \pm 5.50$
& $10.57 \pm 1.49$
& $\mathbf{+57.52}$ \\

\bottomrule
\end{tabular}
\end{table*}

Table 1 compares the mean success rate and progress rate across four benchmarks, averaged over two sampling temperatures, and reports the relative change ($\Delta$) between \textbf{Reflect Reverse} and \textbf{Reflect Forward}. Consistent with these averaged results, the detailed results in Tables~\ref{tab:agentboard-results} and~\ref{tab:agentboard-iLLaDA-results} in Appendix~\ref{app:rsupplementary} show that \textbf{Reflect Reverse} generally outperforms \textbf{Reflect Forward}. Across the 16 configurations, reverse scoring achieves higher SR in 12 and matches it at the reported precision in three others. It also achieves higher PR in 14 configurations.

For LLaDA, reverse scoring improves both SR and PR on ALFWorld and ScienceWorld at both $T=0.8$ and $T=0.9$.
On ALFWorld at $T=0.9$, SR increases from 8.96\% to 16.17\% and PR rises from 31.45\% to 36.24\%. On Jericho, reverse scoring matches forward scoring in SR at both temperatures while improving PR from 17.52\% to 20.53\% at $T=0.8$ and from 20.94\% to 26.72\% at $T=0.9$.

The gains are particularly consistent for iLLaDA, where reverse scoring improves SR in seven of eight configurations and PR in all eight. On BabyAI, SR increases from 8.93\% to 15.77\% at $T=0.9$ and from 8.04\% to 20.24\% at $T=1.0$, while PR rises from 21.52\% to 30.73\%
and from 18.44\% to 35.40\%, respectively. For LLaDA on BabyAI, reverse scoring improves SR at $T=0.9$ but is less consistent on PR. Overall, these results support reverse conditional scoring as an effective candidate-selection criterion, with advantages over forward scoring across both backbones and the sampling temperatures evaluated for each.

\subsubsection{Retry State Loops}
The dominant retry-loop patterns consist of repeatedly executing the same action (e.g., \texttt{AAAAAA}) or alternating between two actions (e.g., \texttt{ABABABAB}). Examples of both pattern can be found in Appendix \ref{app:retrystateexample}. We count each additional complete occurrence beyond the initial action or action pair as one retry, without double-counting overlaps. Thus, \texttt{AAAAAA} contains five retries, while \texttt{ABABABAB} contains three. 

The retry percentage is defined as the total number of retries divided by the interaction budget. As shown in Figure~\ref{fig:LLaDA-iLLaDA-retry}, repetitive behavior is strongly associated with sampling temperature: lower temperatures yield higher mean retry percentages in 21 of the 24 pairwise temperature comparisons, suggesting that reduced action diversity increases agent's repetitive behavior. Despite this general trend, \textbf{Reflect Reverse} consistently mitigates repetition, achieving the lowest mean retry percentage in all 16 evaluated settings. By contrast, \textbf{Reflect Forward} increases repetition relative to \textbf{One Pass} in five of the eight iLLaDA settings.

While these results show that \textbf{Reflect Reverse} is effective at mitigating repetitive behavior, lower retry frequency does not necessarily translate directly into better task performance. For example, on ALFWorld at $T=0.9$, iLLaDA with \textbf{Reflect Forward} exhibits fewer retries than \textbf{One Pass}, yet achieves lower success and progress rates. This indicates that avoiding repetitive action loops is only one component of effective agent behavior: successful task execution also depends on the model's ability to generate correct and task-appropriate actions.

\begin{figure*}[htp!]
  \centering

  \begin{subfigure}{\linewidth}
    \centering
    \includegraphics[width=\linewidth]{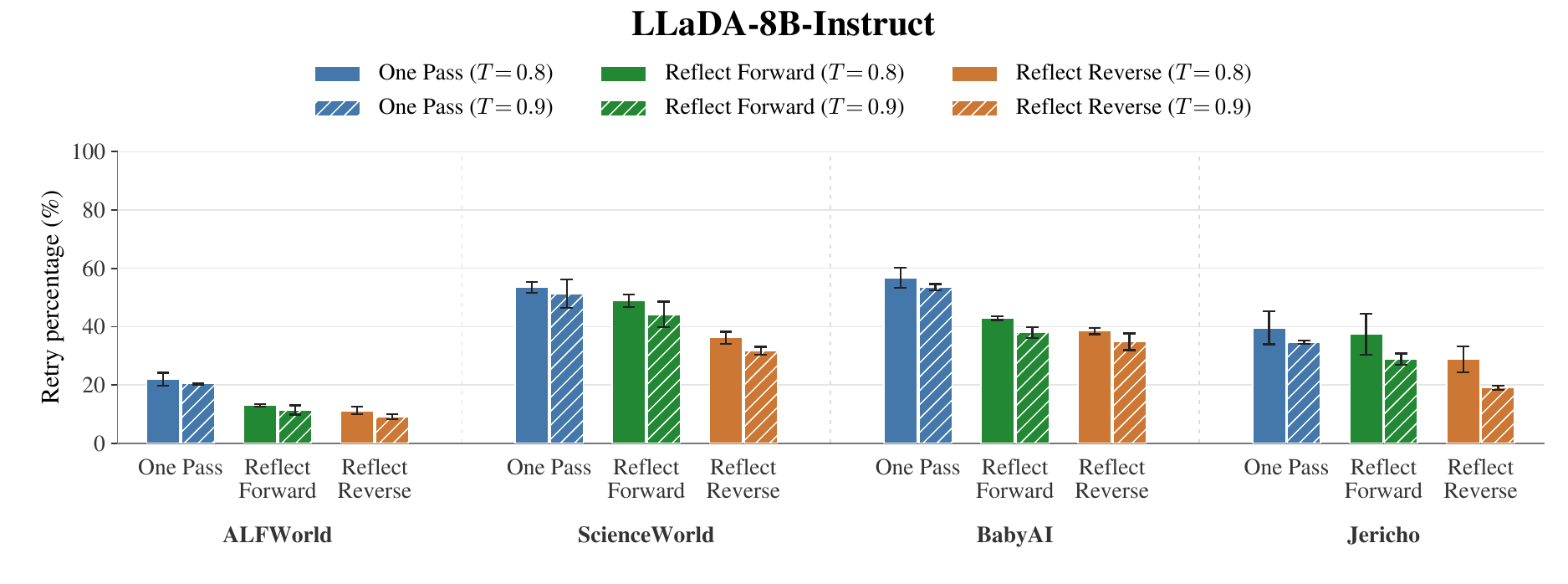}
    % \caption{LLaDA}
    \label{fig:LLaDA-retry}
  \end{subfigure}
  \vspace{-1.5em}

  \begin{subfigure}{\linewidth}
    \centering
    \includegraphics[width=\linewidth]{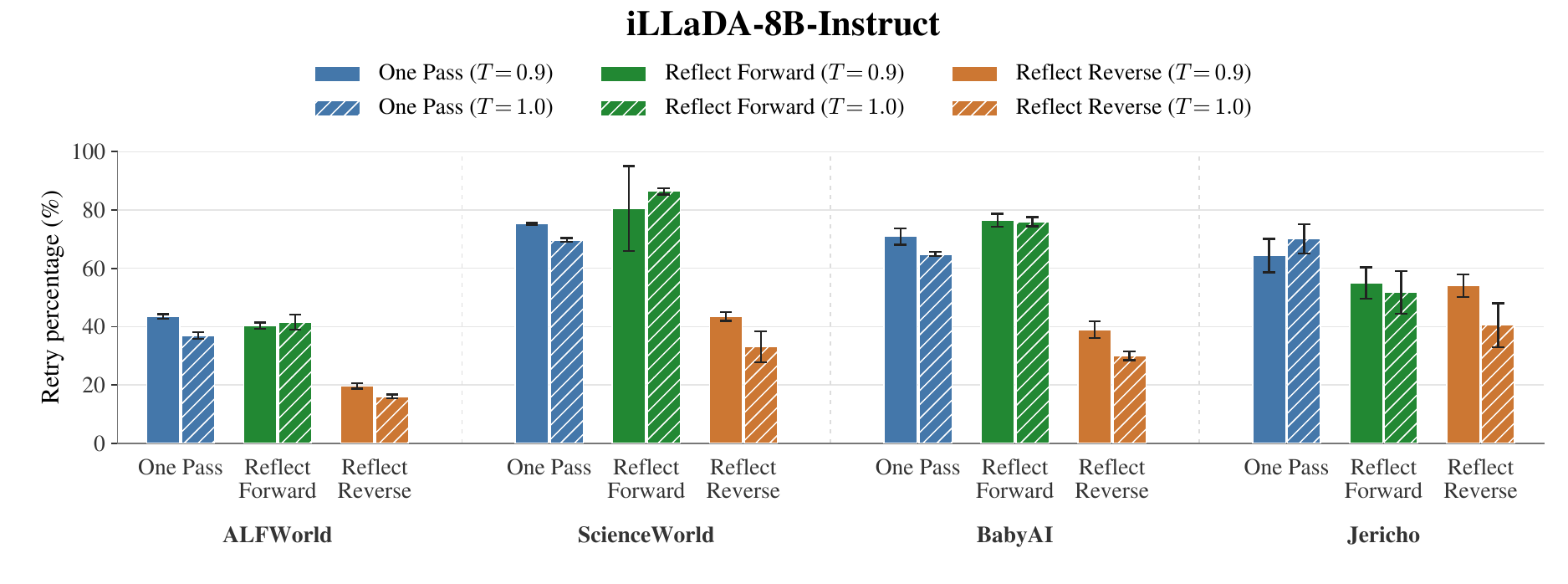}
    % \caption{iLLaDA}
    \label{fig:iLLaDA-retry}
  \end{subfigure}
  \vspace{-3.0em}
  \caption {Retry state percentage of  LLaDA-8B-Instruct and iLLaDA-8B-Instruct.}
  \label{fig:LLaDA-iLLaDA-retry}
\end{figure*}

\section{Related Work}
\label{sec:related}

\paragraph{Diffusion language models and adaptive decoding.}
Masked dLLMs such as LLaDA \citep{Nie2025LargeLD} and iLLaDA \citep{Nie2026ImprovedLL} generate by iteratively denoising masked spans; the standard inference recipe couples confidence-based remasking with semi-autoregressive block decoding \citep{Nie2025LargeLD,Wu2025FastdLLMTA,arriola2025block}.
\citet{ni2026flexibilitytraprethinkingvalue} identify a failure mode of this recipe in reasoning tasks: because the sampler commits the positions it is most confident about and defers the rest, it systematically bypasses the high-entropy ``forking'' tokens at which reasoning paths diverge; by the time these positions are filled, the committed context has already resolved them, so an open decision degenerates into a retrospective alignment with a pre-determined gap (\emph{entropy degradation}).
Forcing left-to-right order restores the confrontation of forks and enlarges the reachable solution space, with the effect growing monotonically in the degree of order freedom.
The retry loops we study are this trap in a sharper form.
At a failure state, \emph{which action now?} is a fork, and unlike in reasoning tasks the context already contains a confident fill for it: the failed action, verbatim in $s$.
Adaptive decoding commits the copy without the open decision ever being sampled, and the deferred positions are finally filled in alignment with the retry.
%Two features of the agentic setting deserve emphasis.
%First, our agent thinks before acting, i.e. the denoised span is a chain of thought followed by the action, but the reasoning does not circumvent the trap: every position, in the reasoning or in the action, can be deferred until the committed context makes it confident, so the thinking never confronts the fork either.
%Second, the mechanism is independent of decoding parallelism: what matters is that the sampler chooses \emph{which} position to commit by confidence, so the trap persists even at one token per step, whereas an autoregressive model has no such choice and must decide the first token of the new action while the failure feedback is the newest evidence.

\paragraph{LLM agents in embodied environments.}
Reasoning-and-acting agent frameworks \citep{yao2023reactsynergizingreasoningacting, shinn2023reflexionlanguageagentsverbal,mowerRobotOperatingSystem2026} couple chain-of-thought traces with environment actions and are standardly evaluated \citep{liu2024agentbench,ma2024agentboard} on ALFWorld \citep{Shridhar2020ALFWorldAT}, ScienceWorld \citep{Wang2022ScienceWorldIY}, BabyAI \citep{ChevalierBoisvert2018BabyAIAP}, and Jericho \citep{Hausknecht2019InteractiveFG}.
Recent evaluations of dLLM-backed agents \citep{Lu2026TheBL} report that the latency gains of parallel decoding do not transfer to agentic competence, with retry loops as a dominant failure mode.
We give this failure a mechanistic account, i.e. adaptive decoding bridging the failure feedback instead of confronting it, and a training-free correction.

\paragraph{Sample-then-rerank action selection.}
Reflect Reverse belongs to the family of decoding-time corrections: best-of-$N$ selection with verifiers \citep{lightman2024let,ji2025surelysafealignmentlarge,kang2026scalable}, self-consistency \citep{wang2023selfconsistencyimproveschainthought}, and PMI-style rescoring of raw likelihoods \citep{holtzman2021surface,ji2026scalablepowersamplingunlocking,nguyen2026modelknowsdecoderfinds}.
These methods share our observation that raw model probability is an unreliable score, but they correct it with an external signal: a learned verifier, a majority vote, or a domain-conditional baseline.
Our score instead is the model's own reverse conditional $\pist(u\mid s,a)$.
The same reverse-conditional identity appears, in a different role, in \citet{lian2026langforcebayesiandecompositionvision}, who train vision--language--action policies by maximizing the log-likelihood ratio $\log p(\ell\mid a,v)-\log p(\ell\mid v)$, the conditional PMI between action and instruction, as a training objective; our correction instead is a training-free, decoding-time rule.

\section{Conclusion And Future Works}

We traced the retry loops that stall diffusion language model agents to a structural property of masked decoding: because the sampler commits the positions it is most confident about and defers the rest, the open decision at a failure state is postponed until the committed context resolves it; and the context already offers a confident fill, the failed action itself.
We formalized the resulting distortion of the action distribution as a \emph{task-blind corruption}, i.e. a state--action factor that inflates contextually salient actions independently of the task, and verified it through goal-swap interventions.
The formalization is what motivates the correction: a factor that carries no task information cannot survive a posterior over tasks, so it cancels from the reverse conditional $\pist(u\mid s,a)$, which masked dLLMs evaluate natively by masking the task tokens and denoising. 
The resulting rule, \textbf{Reflect Reverse}, is training-free, requires no learned critic, value function, or environment rollouts, adds only a few parallel denoising passes per candidate, and improves success and progress rates over forward-scoring baselines on four embodied benchmarks and two backbones.

\paragraph{Future work.}
Firstly, Proposition~\ref{thm:main} governs the comparison among the candidates on the table but proposals are still drawn from the corrupted forward conditional, so the faithful model's support may go uncovered. Extending the invariance to the proposal step (e.g. dividing out an estimated $\varepsilon(s,a)$) would close this gap.
Secondly, the corruption is ultimately a training artifact of adaptive decoding; decoding-time correction could be complemented by fine-tuning at failure states or by regularizing forward--reverse consistency, removing the pathology at its source.
Finally, the retry loop is the agentic form of entropy degradation, and the reverse question, \emph{does this continuation still explain the prompt?}, applies to reasoning chains, suggesting reverse scoring as a general decoding-time instrument for masked diffusion models.

\clearpage

\ifarxiv
\else
\subsection*{AI use statement}
In this work, we used generative AI tools for writing assistance, unifying figure styles, proofreading, implementing model dual cache and checking code correctness. We have not used generative AI tools to generate synthetic data sets, help develop theoretical models or conceptual frameworks, formulate mathematical claims, provide critical ingredients for proving mathematical claims, assist in the writing of proofs, propose or refine hypotheses, design or provide feedback on research methodology or experiments, assist with translation, clean and reformat dataset, support qualitative and thematic data analysis, interpret results. LLM-generated code was verified and tested for correctness. We take responsibility for the final content of this work, including text, claims or artifacts produced with the aid of generative AI.

\subsection*{Ethics statement}
This work focuses on improving the reliability of diffusion language model agents by reducing repetitive action failures in multi-turn settings. Our experiments are conducted in controlled benchmark environments and do not involve human participants or sensitive personal data. We do not identify any direct harmful applications arising from the proposed method, and any broader risks are those generally associated with the use of language-model agents.

\subsection*{Reproducibility statement}
We will publicly release the complete code upon acceptance. All benchmarks used in our experiments, including ALFWorld, ScienceWorld, BabyAI, and Jericho, are publicly available. All evaluated language models, including LLaDA-8B-Instruct, iLLaDA-8B-Instruct, and Qwen2.5-7B-Instruct, are also open source. Section~\ref{sec:experimental_setup} describes the
benchmarks, backbone models, baselines, evaluation metrics, and experimental protocol. The complete set of hyperparameters required to reproduce our experiments is provided in Appendix~\ref{app:hyperparameters}. For the theoretical component, we explicitly specify all notation and assumptions, including the task-blind multiplicative corruption and coherent conditionals assumptions, in Section~\ref{sec:corruption}. Complete derivations and intermediate steps are provided in Appendix~\ref{app:theory} to facilitate independent verification of the analysis.

\fi

%\subsubsection*{Author Contributions}
%If you'd like to, you may include  a section for author contributions as is done
%in many journals. This is optional and at the discretion of the authors.
%
%\subsubsection*{Acknowledgments}
%Use unnumbered third level headings for the acknowledgments. All
%acknowledgments, including those to funding agencies, go at the end of the paper.

\bibliography{iclr2027_conference}

@misc{ni2026flexibilitytraprethinkingvalue,
      title={The Flexibility Trap: Rethinking the Value of Arbitrary Order in Diffusion Language Models}, 
      author={Zanlin Ni and Shenzhi Wang and Yang Yue and Tianyu Yu and Weilin Zhao and Yeguo Hua and Tianyi Chen and Jun Song and Cheng Yu and Bo Zheng and Gao Huang},
      year={2026},
      eprint={2601.15165},
      archivePrefix={arXiv},
      primaryClass={cs.CL},
      url={https://arxiv.org/abs/2601.15165}, 
}

@article{mowerRobotOperatingSystem2026,
  title = {A Robot Operating System Framework for Using Large Language Models in Embodied {{AI}}},
  author = {Mower, Christopher E. and Wan, Yuhui and Yu, Hongzhan and Grosnit, Antoine and Gonzalez-Billandon, Jonas and Zimmer, Matthieu and Liu, Puze and Palenicek, Daniel and Tateo, Davide and Peters, Jan and Qu, Kaixian and Zhang, Mike and Lan, Guowei and Cramariuc, Andrei and Cadena, Cesar and Hutter, Marco and Tian, Guangjian and Zhuang, Yuzhen and Shao, Kun and Quan, Xingyue and Hao, Jianye and Wang, Jun and Bou-Ammar, Haitham},
  year = 2026,
  journal={Nature Machine Intelligence},
  publisher={Nature Publishing Group},
  volume = {8},
  number = {3},
  pages = {313--325},
  issn = {2522-5839},
  doi = {10.1038/s42256-026-01186-z},
  url = {https://doi.org/10.1038/s42256-026-01186-z},
}

@misc{lian2026langforcebayesiandecompositionvision,
      title={LangForce: Bayesian Decomposition of Vision Language Action Models via Latent Action Queries}, 
      author={Shijie Lian and Bin Yu and Xiaopeng Lin and Laurence T. Yang and Zhaolong Shen and Changti Wu and Yuzhuo Miao and Cong Huang and Kai Chen},
      year={2026},
      eprint={2601.15197},
      archivePrefix={arXiv},
      primaryClass={cs.AI},
      url={https://arxiv.org/abs/2601.15197}, 
}

@article{Binette_2019,
   title={A Note on Reverse Pinsker Inequalities},
   volume={65},
   ISSN={1557-9654},
   url={http://dx.doi.org/10.1109/TIT.2019.2896192},
   DOI={10.1109/tit.2019.2896192},
   number={7},
   journal={IEEE Transactions on Information Theory},
   publisher={Institute of Electrical and Electronics Engineers (IEEE)},
   author={Binette, Olivier},
   year={2019},
   month=July, pages={4094–4096} }

@inproceedings{chae2024mitigating,
  title={Mitigating hallucination in abstractive summarization with domain-conditional mutual information},
  author={Chae, Kyubyung and Choi, Jaepill and Jo, Yohan and Kim, Taesup},
  booktitle={Findings of the association for computational linguistics: NAACL 2024},
  pages={1809--1820},
  year={2024}
}

@misc{yao2023reactsynergizingreasoningacting,
      title={ReAct: Synergizing Reasoning and Acting in Language Models}, 
      author={Shunyu Yao and Jeffrey Zhao and Dian Yu and Nan Du and Izhak Shafran and Karthik Narasimhan and Yuan Cao},
      year={2023},
      eprint={2210.03629},
      archivePrefix={arXiv},
      primaryClass={cs.CL},
      url={https://arxiv.org/abs/2210.03629}, 
}

@misc{shinn2023reflexionlanguageagentsverbal,
      title={Reflexion: Language Agents with Verbal Reinforcement Learning}, 
      author={Noah Shinn and Federico Cassano and Edward Berman and Ashwin Gopinath and Karthik Narasimhan and Shunyu Yao},
      year={2023},
      eprint={2303.11366},
      archivePrefix={arXiv},
      primaryClass={cs.AI},
      url={https://arxiv.org/abs/2303.11366}, 
}

@misc{wang2023selfconsistencyimproveschainthought,
      title={Self-Consistency Improves Chain of Thought Reasoning in Language Models}, 
      author={Xuezhi Wang and Jason Wei and Dale Schuurmans and Quoc Le and Ed Chi and Sharan Narang and Aakanksha Chowdhery and Denny Zhou},
      year={2023},
      eprint={2203.11171},
      archivePrefix={arXiv},
      primaryClass={cs.CL},
      url={https://arxiv.org/abs/2203.11171}, 
}

@misc{ji2025surelysafealignmentlarge,
      title={On Almost Surely Safe Alignment of Large Language Models at Inference-Time}, 
      author={Xiaotong Ji and Shyam Sundhar Ramesh and Matthieu Zimmer and Ilija Bogunovic and Jun Wang and Haitham Bou Ammar},
      year={2025},
      eprint={2502.01208},
      archivePrefix={arXiv},
      primaryClass={cs.LG},
      url={https://arxiv.org/abs/2502.01208}, 
}

@article{kang2026scalable,
  title={Scalable best-of-n selection for large language models via self-certainty},
  author={Kang, Zhewei and Zhao, Xuandong and Song, Dawn},
  journal={Advances in neural information processing systems},
  volume={38},
  pages={19720--19745},
  year={2026}
}

@inproceedings{holtzman2021surface,
  title={Surface form competition: Why the highest probability answer isn’t always right},
  author={Holtzman, Ari and West, Peter and Shwartz, Vered and Choi, Yejin and Zettlemoyer, Luke},
  booktitle={Proceedings of the 2021 conference on empirical methods in natural language processing},
  pages={7038--7051},
  year={2021}
}

@misc{li2022diffusionlmimprovescontrollabletext,
      title={Diffusion-LM Improves Controllable Text Generation}, 
      author={Xiang Lisa Li and John Thickstun and Ishaan Gulrajani and Percy Liang and Tatsunori B. Hashimoto},
      year={2022},
      eprint={2205.14217},
      archivePrefix={arXiv},
      primaryClass={cs.CL},
      url={https://arxiv.org/abs/2205.14217}, 
}

@article{lou2023discrete,
  title={Discrete diffusion modeling by estimating the ratios of the data distribution},
  author={Lou, Aaron and Meng, Chenlin and Ermon, Stefano},
  journal={arXiv preprint arXiv:2310.16834},
  year={2023}
}

@inproceedings{gong2025scaling,
  title={Scaling diffusion language models via adaptation from autoregressive models},
  author={Gong, Shansan and Agarwal, Shivam and Zhang, Yizhe and Ye, Jiacheng and Zheng, Lin and Li, Mukai and An, Chenxin and Zhao, Peilin and Bi, Wei and Han, Jiawei and others},
  booktitle={International Conference on Learning Representations},
  volume={2025},
  pages={5046--5073},
  year={2025}
}

@article{holtzman2019curious,
  title={The curious case of neural text degeneration},
  author={Holtzman, Ari and Buys, Jan and Du, Li and Forbes, Maxwell and Choi, Yejin},
  journal={arXiv preprint arXiv:1904.09751},
  year={2019}
}

@article{austin2021structured,
  title={Structured denoising diffusion models in discrete state-spaces},
  author={Austin, Jacob and Johnson, Daniel D and Ho, Jonathan and Tarlow, Daniel and Van Den Berg, Rianne},
  journal={Advances in neural information processing systems},
  volume={34},
  pages={17981--17993},
  year={2021}
}

@article{sahoo2024simple,
  title={Simple and effective masked diffusion language models},
  author={Sahoo, Subham S and Arriola, Marianne and Schiff, Yair and Gokaslan, Aaron and Marroquin, Edgar and Chiu, Justin T and Rush, Alexander and Kuleshov, Volodymyr},
  journal={Advances in Neural Information Processing Systems},
  volume={37},
  pages={130136--130184},
  year={2024}
}

@article{shi2024simplified,
  title={Simplified and generalized masked diffusion for discrete data},
  author={Shi, Jiaxin and Han, Kehang and Wang, Zhe and Doucet, Arnaud and Titsias, Michalis},
  journal={Advances in neural information processing systems},
  volume={37},
  pages={103131--103167},
  year={2024}
}

@inproceedings{schiff2025simple,
  title={Simple guidance mechanisms for discrete diffusion models},
  author={Schiff, Yair and Sahoo, Subham and Phung, Hao and Wang, Guanghan and Boshar, Sam and Dalla-Torre, Hugo and Almeida, Bernardo and Rush, Alexander and Pierrot, Thomas and Kuleshov, Volodymyr},
  booktitle={International Conference on Learning Representations},
  volume={2025},
  pages={43776--43821},
  year={2025}
}

@article{wei2022chain,
  title={Chain-of-thought prompting elicits reasoning in large language models},
  author={Wei, Jason and Wang, Xuezhi and Schuurmans, Dale and Bosma, Maarten and Xia, Fei and Chi, Ed and Le, Quoc V and Zhou, Denny and others},
  journal={Advances in neural information processing systems},
  volume={35},
  pages={24824--24837},
  year={2022}
}

@article{gulrajani2023likelihood,
  title={Likelihood-based diffusion language models},
  author={Gulrajani, Ishaan and Hashimoto, Tatsunori B},
  journal={Advances in Neural Information Processing Systems},
  volume={36},
  pages={16693--16715},
  year={2023}
}

@inproceedings{ou2025your,
  title={Your absorbing discrete diffusion secretly models the conditional distributions of clean data},
  author={Ou, Jingyang and Nie, Shen and Xue, Kaiwen and Zhu, Fengqi and Sun, Jiacheng and Li, Zhenguo and Li, Chongxuan},
  booktitle={International Conference on Learning Representations},
  volume={2025},
  pages={64972--65009},
  year={2025}
}

@article{ma2024agentboard,
  title={Agentboard: An analytical evaluation board of multi-turn llm agents},
  author={Ma, Chang and Zhang, Junlei and Zhu, Zhihao and Yang, Cheng and Yang, Yujiu and Jin, Yaohui and Lan, Zhenzhong and Kong, Lingpeng and He, Junxian},
  journal={Advances in neural information processing systems},
  volume={37},
  pages={74325--74362},
  year={2024}
}

@inproceedings{arriola2025block,
  title={Block diffusion: Interpolating between autoregressive and diffusion language models},
  author={Arriola, Marianne and Gokaslan, Aaron and Chiu, Justin and Yang, Zhihan and Qi, Zhixuan and Han, Jiaqi and Sahoo, Subham and Kuleshov, Volodymyr},
  booktitle={International Conference on Learning Representations},
  volume={2025},
  pages={50726--50753},
  year={2025}
}

@inproceedings{liu2024agentbench,
  title={Agentbench: Evaluating llms as agents},
  author={Liu, Xiao and Yu, Hao and Zhang, Hanchen and Xu, Yifan and Lei, Xuanyu and Lai, Hanyu and Gu, Yu and Ding, Hangliang and Men, Kaiwen and Yang, Kejuan and others},
  booktitle={International Conference on Learning Representations},
  volume={2024},
  pages={52989--53046},
  year={2024}
}

@inproceedings{lightman2024let,
  title={Let's verify step by step},
  author={Lightman, Hunter and Kosaraju, Vineet and Burda, Yuri and Edwards, Harrison and Baker, Bowen and Lee, Teddy and Leike, Jan and Schulman, John and Sutskever, Ilya and Cobbe, Karl},
  booktitle={International Conference on Learning Representations},
  volume={2024},
  pages={39578--39601},
  year={2024}
}

@article{Shridhar2020ALFWorldAT,
  title={ALFWorld: Aligning Text and Embodied Environments for Interactive Learning},
  author={Mohit Shridhar and Xingdi Yuan and Marc-Alexandre C{\^o}t{\'e} and Yonatan Bisk and Adam Trischler and Matthew J. Hausknecht},
  journal={ArXiv},
  year={2020},
  volume={abs/2010.03768},
  url={https://api.semanticscholar.org/CorpusID:222208810}
}

@inproceedings{Wang2022ScienceWorldIY,
  title={ScienceWorld: Is your Agent Smarter than a 5th Grader?},
  author={Ruoyao Wang and Peter Alexander Jansen and Marc-Alexandre C{\^o}t{\'e} and Prithviraj Ammanabrolu},
  booktitle={Conference on Empirical Methods in Natural Language Processing},
  year={2022},
  url={https://api.semanticscholar.org/CorpusID:247451124}
}

@inproceedings{ChevalierBoisvert2018BabyAIAP,
  title={BabyAI: A Platform to Study the Sample Efficiency of Grounded Language Learning},
  author={Maxime Chevalier-Boisvert and Dzmitry Bahdanau and Salem Lahlou and Lucas Willems and Chitwan Saharia and Thien Huu Nguyen and Yoshua Bengio},
  booktitle={International Conference on Learning Representations},
  year={2018},
  url={https://api.semanticscholar.org/CorpusID:59536625}
}

@inproceedings{Hausknecht2019InteractiveFG,
  title={Interactive Fiction Games: A Colossal Adventure},
  author={Matthew J. Hausknecht and Prithviraj Ammanabrolu and Marc-Alexandre C{\^o}t{\'e} and Xingdi Yuan},
  booktitle={AAAI Conference on Artificial Intelligence},
  year={2019},
  url={https://api.semanticscholar.org/CorpusID:202565447}
}

@article{Nie2026ImprovedLL,
  title={Improved Large Language Diffusion Models},
  author={Shen Nie and Qi-Yang Min and Shaoxuan Xu and Zihao Huang and Yuxuan Song and Shan Yong and Yankai Lin and Wayne Xin Zhao and Chongxuan Li and Jiaxin Wen},
  journal={ArXiv},
  year={2026},
  volume={abs/2606.25331},
  url={https://api.semanticscholar.org/CorpusID:289630838}
}

@article{Nie2025LargeLD,
  title={Large Language Diffusion Models},
  author={Shen Nie and Fengqi Zhu and Zebin You and Xiaolu Zhang and Jingyang Ou and Jun Hu and Jun Zhou and Yankai Lin and Jirong Wen and Chongxuan Li},
  journal={ArXiv},
  year={2025},
  volume={abs/2502.09992},
  url={https://api.semanticscholar.org/CorpusID:276395038}
}

@article{Wu2025FastdLLMTA,
  title={Fast-dLLM: Training-free Acceleration of Diffusion LLM by Enabling KV Cache and Parallel Decoding},
  author={Chengyue Wu and Hao Zhang and Shuchen Xue and Zhijian Liu and Shizhe Diao and Ligeng Zhu and Ping Luo and Song Han and Enze Xie},
  journal={ArXiv},
  year={2025},
  volume={abs/2505.22618},
  url={https://api.semanticscholar.org/CorpusID:278959508}
}

@article{Lu2026TheBL,
  title={The Bitter Lesson of Diffusion Language Models for Agentic Workflows: A Comprehensive Reality Check},
  author={Qingyu Lu and Liang Ding and Kanjian Zhang and Jinxia Zhang and Dacheng Tao},
  journal={ArXiv},
  year={2026},
  volume={abs/2601.12979},
  url={https://api.semanticscholar.org/CorpusID:284910491}
}

@misc{nguyen2026modelknowsdecoderfinds,
      title={The Model Knows, the Decoder Finds: Future Value Guided Particle Power Sampling}, 
      author={Tu Nguyen and Matthieu Zimmer and Rasul Tutunov and Xiaotong Ji and Haitham Bou Ammar},
      year={2026},
      eprint={2605.02427},
      archivePrefix={arXiv},
      primaryClass={cs.AI},
      url={https://arxiv.org/abs/2605.02427}, 
}

@misc{ji2026scalablepowersamplingunlocking,
      title={Scalable Power Sampling: Unlocking Efficient, Training-Free Reasoning for LLMs via Distribution Sharpening}, 
      author={Xiaotong Ji and Rasul Tutunov and Matthieu Zimmer and Haitham Bou Ammar},
      year={2026},
      eprint={2601.21590},
      archivePrefix={arXiv},
      primaryClass={cs.LG},
      url={https://arxiv.org/abs/2601.21590}, 
}

@misc{christianos2023panguagentfinetunablegeneralistagent,
      title={Pangu-Agent: A Fine-Tunable Generalist Agent with Structured Reasoning}, 
      author={Filippos Christianos and Georgios Papoudakis and Matthieu Zimmer and Thomas Coste and Zhihao Wu and Jingxuan Chen and Khyati Khandelwal and James Doran and Xidong Feng and Jiacheng Liu and Zheng Xiong and Yicheng Luo and Jianye Hao and Kun Shao and Haitham Bou-Ammar and Jun Wang},
      year={2023},
      eprint={2312.14878},
      archivePrefix={arXiv},
      primaryClass={cs.AI},
      url={https://arxiv.org/abs/2312.14878}, 
}
\bibliographystyle{iclr2027_conference}

\clearpage
\appendix
\section{Task corruption verification}
\label{app:taskcorruption}
We evaluate task sensitivity on 208 retry states from ALFWorld benchmark. For each state $s_i$, we retain the interaction history and compare the repeated action $a_i^{\mathrm{rep}}$ with a distinct admissible alternative $a_i^{\mathrm{alt}}$, selected by Qwen2.5-7B-Instruct. Both actions remain fixed while the original task $u_{i,0}$ is replaced by $K=10$ randomly sampled task descriptions. Let $p_m(a \mid s,u)$ denote the action score for model $m$: summed autoregressive token log likelihood for Qwen, or a Monte Carlo diffusion variational score estimate for LLaDA. LLaDA uses identical sampled masks across task conditions.

For visualization, both actions are centered on the repeated action's
original-task score:
\begin{equation}
    D_{i,j}^{m,a}
    =
    p_m(a \mid s_i,u_{i,j})
    -
    p_m(a_i^{\mathrm{rep}} \mid s_i,u_{i,0}), \qquad a \in \{a^{\mathrm{rep}}, a^{\mathrm{alt}}\}.
    \label{eq:corruption_centered_score}
\end{equation}
The plotted curves are the state averages,
$\bar{D}_j^{m,a} = N^{-1}\sum_{i=1}^{N} D_{i,j}^{m,a}$.
This common baseline preserves the relative preference between the two
actions; the alternative curve is therefore not centered on its own
original-task score.

We quantify task sensitivity using the action-preference margin:
\begin{equation}
    \Delta_{i,j}^{m}
    =
    p_m(a_i^{\mathrm{alt}} \mid s_i,u_{i,j})
    -
    p_m(a_i^{\mathrm{rep}} \mid s_i,u_{i,j}).
    \label{eq:corruption_preference_margin}
\end{equation}
For each state, we compute its population standard deviation across the
replacement tasks, then average over states:
\begin{equation}
    \sigma_m
    =
    \frac{1}{N}\sum_{i=1}^{N}
    \sqrt{
        \frac{1}{K}\sum_{j=1}^{K}
        \left(\Delta_{i,j}^{m}-\bar{\Delta}_i^{m}\right)^2
    },
    \qquad
    \bar{\Delta}_i^{m}
    =
    \frac{1}{K}\sum_{j=1}^{K}\Delta_{i,j}^{m}.
    \label{eq:corruption_goal_sensitivity}
\end{equation}
The original task is excluded from this standard deviation. Smaller
$\sigma_m$ indicates less variation in action preference across
replacement goals.

We additionally measure how frequently task replacement reverses the
preferred action. Defining
$b_{i,j}^{m}=\mathbf{1}[\Delta_{i,j}^{m}>0]$,
the preference flip rate is
\begin{equation}
    F_m
    =
    \frac{1}{NK}
    \sum_{i=1}^{N}\sum_{j=1}^{K}
    \left|b_{i,j}^{m}-b_{i,0}^{m}\right|.
    \label{eq:corruption_flip_rate}
\end{equation}
Each replacement is compared with the original task, yielding 2,080
comparisons per model. These calculations give
$\sigma_{\mathrm{LLaDA}}=1.01$ and $\sigma_{\mathrm{Qwen}}=4.71$ nats,
with flip rates of $6.6\%$ and $44.4\%$, respectively.

The shaded bands use 2,000 episode-level bootstrap resamples to obtain
pointwise 95\% intervals for each action's change from its own
original-task score. For the alternative curve, these intervals are
shifted by the observed mean original-task preference margin to match
the plotted baseline; they therefore represent uncertainty in the
task-induced change, conditional on that baseline offset.

\clearpage
\section{Reflect reverse algorithm}
\label{app:pseudo}
We summarize the method introduced in Section~\ref{sec:method-algo} in the following pseudocode.
\begin{algorithm}[!htbp]
\caption{Reflect Reverse}
\label{alg:reflect-reverse}
\begin{algorithmic}[1]
\REQUIRE State $s$, task $u$, dLLM $\pist$, number of proposals $K$,
         number of reflection rollouts $M$, inverse temperature $\beta$
\ENSURE Selected action $a^\star$

\STATE \textbf{Propose:}
\STATE Sample $K$ action candidates
       $a_1,\ldots,a_K \sim \pist(\cdot \mid s,u)$
       with ReAct
\STATE Deduplicate candidates to obtain $\mathcal{A}(s)$

\STATE \textbf{Reflect:}
\FOR{each $a \in \mathcal{A}(s)$}
    \STATE $\phi(a) \leftarrow 0$
    \FOR{$m = 1,\ldots,M$}
        \STATE Sample a subset of task-token positions
               $B_m \subseteq \{1,\ldots,|u|\}$
        \STATE Construct $\tilde{u}^{(m)}$ by masking
               positions $B_m$ in $u$
        \STATE Condition the dLLM on $(s,a,\tilde{u}^{(m)})$
        \STATE $\displaystyle
               \ell_m(a) \leftarrow
               \frac{1}{|B_m|}
               \sum_{j \in B_m}
               \log \pist
               \bigl(u_j \mid s,a,\tilde{u}^{(m)}\bigr)$
        \STATE $\phi(a) \leftarrow \phi(a) + \ell_m(a)/M$
    \ENDFOR
\ENDFOR

\STATE \textbf{Act:}
\STATE $\displaystyle
       \rho(a \mid s,u)
       \leftarrow
       \frac{\exp(\beta \phi(a))}
       {\sum_{a' \in \mathcal{A}(s)}
        \exp(\beta \phi(a'))}
       \quad
       \forall a \in \mathcal{A}(s)$
\STATE Sample $a^\star \sim \rho(\cdot \mid s,u)$
\RETURN $a^\star$
\end{algorithmic}
\end{algorithm}

\section{Proof and theory}
\label{app:theory}

Our method uses the model's reverse conditional $\pist(u\mid s,a)$, which masked dLLMs evaluate natively. 
We assume the two conditionals are consistent views of one underlying joint.

\begin{assumption}[Coherent conditionals]
\label{ass:coherence}
The model's forward and reverse conditionals satisfy Bayes' rule with respect to the task prior $p(u\mid s)$: for every $a$ and every $u$ in the support of $p(\cdot\mid s)$,
\begin{equation}
\label{eq:coherence}
\pist(u\mid s,a)
\;=\;
\frac{p(u\mid s)\,\pist(a\mid s,u)}
{\sum_{u'}p(u'\mid s)\,\pist(a\mid s,u')}.
\end{equation}
\end{assumption}

Assumption~\ref{ass:coherence} is an idealization of the same kind used whenever a single model's conditionals are composed; it is what licenses reading the reverse score as a posterior. 
A stronger variant, in which $\pist(u\mid s,a)$ is coherent with the task posterior $p(u\mid s,a)$ as well, removes the task prior from the proposition entirely (Remark~\ref{rem:full-coherence}).

\subsection{Proof of the recovery proposition}
\label{app:theory_prop}
This appendix proves Proposition~\ref{thm:main}. The argument has two steps. 
The first step (Lemma~\ref{lem:factor}) isolates the only property of the task prior that matters: it can be changed arbitrarily without moving the induced task posterior, provided the corruption is re-absorbed accordingly. 
The second step substitutes Assumption~\ref{ass:corruption} and lets the corruption cancel.

\begin{lemma}[Prior rescaling leaves the posterior untouched]
\label{lem:factor}
Let $q(u\mid s)$ and $q'(u\mid s)$ be two task priors with the same support,
and let $\nu(a\mid s,u)$ be any conditional. Define
\begin{equation}
\label{eq:induced}
\nu_q(u\mid s,a)
\;:=\;
\frac{q(u\mid s)\,\nu(a\mid s,u)}{\sum_{u'}q(u'\mid s)\,\nu(a\mid s,u')},
\end{equation}
and $\nu_{q'}(u\mid s,a)$ analogously with $q'$ in place of $q$. 
Then $\nu_q(u\mid s,a)=\nu_{q'}(u\mid s,a)$ for every $a$ if $q'(u\mid s)/q(u\mid s)$ is constant in $u$ on the support; in particular the induced posterior is unchanged whenever $q'$ is replaced by $q$ itself.
\end{lemma}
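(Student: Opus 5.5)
The plan is a direct substitution: the claim is that multiplying the prior by a positive constant leaves the induced posterior unchanged. Fix $a$ and suppose $q'(u\mid s)=c\,q(u\mid s)$ for every $u$ in the common support $S$, with $c>0$ a constant independent of $u$. Positivity of $c$ follows because $q$ and $q'$ are probability distributions with the same support, so both are strictly positive on $S$.

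First I substitute $q'=c\,q$ into the definition~\eqref{eq:induced} of $\nu_{q'}$. The numerator becomes $c\,q(u\mid s)\,\nu(a\mid s,u)$. The denominator becomes $\sum_{u'}c\,q(u'\mid s)\,\nu(a\mid s,u')=c\sum_{u'}q(u'\mid s)\,\nu(a\mid s,u')$. Terms with $u'$ outside $S$ vanish under both priors, so the two sums range over the same set. The constant $c$ factors out of numerator and denominator and cancels, which gives $\nu_{q'}(u\mid s,a)=\nu_q(u\mid s,a)$. Since $a$ was arbitrary, this holds for every $a$.

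The second clause, replacing $q'$ by $q$ itself, is the special case $c=1$ and needs no separate argument.

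The only point that needs care is well-definedness. The induced posterior only makes sense when the denominator $\sum_{u'}q(u'\mid s)\,\nu(a\mid s,u')$ is positive. I will state that the identity is asserted for those $a$ at which this holds. Because the two denominators differ only by the factor $c>0$, one is positive exactly when the other is, so both posteriors are defined at exactly the same actions. No other obstacle is expected.
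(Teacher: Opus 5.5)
Your proposal is correct and matches the paper's argument: the paper also notes that replacing $q$ by $q'$ multiplies the numerator and the denominator by the same factor when $q'/q$ is constant in $u$, so the factor cancels, and $q'=q$ is the trivial case. Your remark on well-definedness (the two denominators are positive at exactly the same actions) is a small refinement that the paper leaves implicit.
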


\begin{proof}
Immediate from Eq.\ref{eq:induced}: replacing $q$ by $q'$ multiplies the numerator at $u$ by $q'(u\mid s)/q(u\mid s)$ and the denominator by the same factor averaged over $u'$; the two coincide for every $a$ exactly when the ratio does not depend on $u$.
The trivial case $q'=q$ gives the invariance used below.
\end{proof}

\begin{proposition*}[Proposition \ref{thm:main} restated; exact recovery of the faithful task posterior]
Let Assumptions~\ref{ass:corruption} and \ref{ass:coherence} hold. 
Then, for every state $s$, task $u$ in the support of $p(\cdot\mid s)$, and action $a$,
\begin{equation}
\label{eq:master-app}
\pist(u\mid s,a)
\;=\;
\frac{\omega(u\mid s)\,\mu(a\mid s,u)}
{\displaystyle\sum_{u'}\omega(u'\mid s)\,\mu(a\mid s,u')},
\end{equation}
where
\begin{equation}
\label{eq:weights-app}
\omega(u\mid s)\;:=\;\frac{p(u\mid s)}{Z(s,u)}
\end{equation}
is a positive task weight that does not depend on the candidate action $a$, and $Z(s,u)=\sum_{a'}\mu(a'\mid s,u)\,e^{\varepsilon(s,a')}$ is the normalizer of Assumption~\ref{ass:corruption}.
\end{proposition*}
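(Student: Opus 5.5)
The plan is a direct substitution: start from the coherence identity of Assumption~\ref{ass:coherence}, which writes the reverse score $\pist(u\mid s,a)$ as the Bayes posterior induced by the prior $p(\cdot\mid s)$ and the forward conditional $\pist(a\mid s,\cdot)$, and replace every occurrence of the forward conditional using Assumption~\ref{ass:corruption}. That is, for each $u'$ in the support of $p(\cdot\mid s)$ I substitute
\begin{equation*}
p(u'\mid s)\,\pist(a\mid s,u') \;=\; \frac{p(u'\mid s)}{Z(s,u')}\,\mu(a\mid s,u')\,e^{\varepsilon(s,a)} \;=\; \omega(u'\mid s)\,\mu(a\mid s,u')\,e^{\varepsilon(s,a)},
\end{equation*}
both in the numerator (at $u'=u$) and in each summand of the denominator of Eq.~\ref{eq:coherence}.

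The key step is then the cancellation. The factor $e^{\varepsilon(s,a)}$ depends on $a$ but not on $u'$, because Assumption~\ref{ass:corruption} requires the same $\varepsilon$ for every task in the support. It therefore factors out of the sum over $u'$ in the denominator and cancels against the same factor in the numerator, leaving exactly Eq.~\ref{eq:master-app}. This is the content of Lemma~\ref{lem:factor} read with the roles reversed: a multiplicative factor constant in the conditioning variable of the sum does not move the induced posterior. I would cite the lemma or simply note the cancellation inline. Tasks outside the support contribute zero to both sides and can be ignored.

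Two side remarks complete the argument. First, $\omega(u\mid s)=p(u\mid s)/Z(s,u)$ is positive and independent of $a$. Positivity holds because $p(u\mid s)>0$ on the support, and $Z(s,u)$ is a sum of nonnegative terms $\mu(a'\mid s,u)e^{\varepsilon(s,a')}$ that is strictly positive, since $\mu(\cdot\mid s,u)$ is a probability distribution and hence has some $a'$ with positive mass. Independence from $a$ holds because $Z$ sums $a'$ out. Second, the only delicate point, which I expect to be the main obstacle, is well-definedness. The denominator $\sum_{u'}\omega(u'\mid s)\mu(a\mid s,u')$ vanishes exactly when $\sum_{u'}p(u'\mid s)\pist(a\mid s,u')$ does, so the identity should be read for actions where the coherence formula itself is defined, or under the convention that both sides are undefined together. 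I would state this explicitly, and otherwise present the result as a two-line computation.
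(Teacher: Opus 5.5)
Your proposal is correct and follows the paper's proof: substitute Assumption~\ref{ass:corruption} into the coherence identity of Assumption~\ref{ass:coherence}, then cancel the task-independent factor $e^{\varepsilon(s,a)}$ between the numerator and the factored-out sum in the denominator. Your extra remarks are sound details that the paper leaves implicit: positivity of $\omega$ (from $p(u\mid s)>0$ and $Z(s,u)>0$), tasks outside the support contributing nothing, and reading the identity only where the denominator is nonzero.
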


\begin{proof}
By Assumption~\ref{ass:coherence}, the model's reverse conditional satisfies
\begin{equation}
\label{eq:coherence-app}
\pist(u\mid s,a)
\;=\;
\frac{p(u\mid s)\,\pist(a\mid s,u)}
{\sum_{u'}p(u'\mid s)\,\pist(a\mid s,u')}.
\end{equation}
Apply Assumption~\ref{ass:corruption} to every term. The numerator at $u$ becomes
\begin{equation}
\label{eq:num-app}
p(u\mid s)\,\pist(a\mid s,u)
\;=\;
p(u\mid s)\,\frac{\mu(a\mid s,u)\,e^{\varepsilon(s,a)}}{Z(s,u)},
\end{equation}
and the denominator becomes
\begin{equation}
\label{eq:den-app}
\sum_{u'}p(u'\mid s)\,\pist(a\mid s,u')
\;=\;
\sum_{u'}p(u'\mid s)\,
\frac{\mu(a\mid s,u')\,e^{\varepsilon(s,a)}}{Z(s,u')}
\;=\;
e^{\varepsilon(s,a)}
\sum_{u'}\frac{p(u'\mid s)}{Z(s,u')}\,\mu(a\mid s,u'),
\end{equation}
where the factor $e^{\varepsilon(s,a)}$ carries no task index $u'$ and factors out of the sum. 
Substituting Eq.\ref{eq:num-app} and Eq.\ref{eq:den-app} into Eq.\ref{eq:coherence-app}, the factor $e^{\varepsilon(s,a)}$ appears once in the numerator and once in the denominator and cancels exactly:
\begin{equation*}
\pist(u\mid s,a)
\;=\;
\frac{\bigl(p(u\mid s)/Z(s,u)\bigr)\,\mu(a\mid s,u)}
{\sum_{u'}\bigl(p(u'\mid s)/Z(s,u')\bigr)\,\mu(a\mid s,u')}
\;=\;
\frac{\omega(u\mid s)\,\mu(a\mid s,u)}
{\sum_{u'}\omega(u'\mid s)\,\mu(a\mid s,u')},
\end{equation*}
which is Eq.\ref{eq:master-app}. 
Lemma~\ref{lem:factor} is what licenses reading the right-hand side as a task posterior: the prior has been rescaled from $p$ to $\omega$, but the induced posterior remains a genuine Bayes posterior of $\mu$, simply under the prior $\omega$.
\end{proof}

Identity Eq.\ref{eq:master-app} holds for every action individually, so it preserves every feature of the comparison of actions: for any $a_1,a_2$,
\begin{align}
\label{eq:margins-app}
& \log\pist(u\mid s,a_1)-\log\pist(u\mid s,a_2)
\;=\; \\
& \log\frac{\mu(a_1\mid s,u)}{\sum_{u'}\omega(u'\mid s)\,\mu(a_1\mid s,u')}
\;-\;
\log\frac{\mu(a_2\mid s,u)}{\sum_{u'}\omega(u'\mid s)\,\mu(a_2\mid s,u')},
\end{align}
with equal margins, equal ties, and hence the same ordering, top-$k$ set, and argmax as the faithful model's task-evidence ratio; the margins are those of the ratio, and need not coincide with the margins of $\mu(\cdot\mid s,u)$ itself (Remark~\ref{rem:ratio-vs-conditional}).

\begin{corollary}[The corrected policy is the softmax of the faithful task-evidence ratio, at any temperature]
\label{cor:policy-app}
Let $\mathcal{A}(s)$ be a finite candidate set and $\beta>0$ an inverse temperature. Under Assumptions~\ref{ass:corruption} and \ref{ass:coherence},
\begin{equation}
\label{eq:policy-app}
\frac{\pist(u\mid s,a)^{\beta}}
{\sum_{a'\in\mathcal{A}(s)}\pist(u\mid s,a')^{\beta}}
\;=\;
\frac{\bigl(\mu(a\mid s,u)\big/\sum_{u'}\omega(u'\mid s)\,\mu(a\mid s,u')\bigr)^{\beta}}
{\sum_{a'\in\mathcal{A}(s)}
\bigl(\mu(a'\mid s,u)\big/\sum_{u'}\omega(u'\mid s)\,\mu(a'\mid s,u')\bigr)^{\beta}}
\end{equation}
for every $a\in\mathcal{A}(s)$: an exact equality of probability distributions over candidates.
The limit $\beta\to\infty$ recovers the argmax statement; $\beta\to 0$ recovers uniform sampling.
\end{corollary}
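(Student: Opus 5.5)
The plan is to obtain Corollary~\ref{cor:policy-app} directly from the pointwise identity of Proposition~\ref{thm:main}, Eq.~\ref{eq:master-app}. Fix $s$, $u$ in the support of $p(\cdot\mid s)$, and the finite candidate set $\mathcal{A}(s)$. For each $a\in\mathcal{A}(s)$, Eq.~\ref{eq:master-app} factors the reverse score as
\begin{equation*}
\pist(u\mid s,a)\;=\;\omega(u\mid s)\,r(a),\qquad r(a):=\frac{\mu(a\mid s,u)}{\sum_{u'}\omega(u'\mid s)\,\mu(a\mid s,u')}.
\end{equation*}
The essential feature is that $\omega(u\mid s)=p(u\mid s)/Z(s,u)$ is strictly positive and does not depend on $a$.

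Raising to the power $\beta>0$ gives $\pist(u\mid s,a)^{\beta}=\omega(u\mid s)^{\beta}\,r(a)^{\beta}$. Substituting this into the left-hand side of Eq.~\ref{eq:policy-app}, the common factor $\omega(u\mid s)^{\beta}$ appears in the numerator and in every term of the denominator sum over $a'\in\mathcal{A}(s)$, so it cancels. What remains is exactly the right-hand side, for every $a\in\mathcal{A}(s)$.

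Before dividing, I would check that the expression is well defined. This is the only step needing care. If every candidate had $\pist(u\mid s,a')=0$, the left-hand side would be $0/0$. I would therefore restrict attention to the case where some candidate has $\mu(a'\mid s,u)>0$, which holds, for example, when $\mathcal{A}(s)\subseteq\supp\mu(\cdot\mid s,u)$ as in Theorem~\ref{thm:bias}. Similarly, for each $a$ the denominator $\sum_{u'}\omega(u'\mid s)\mu(a\mid s,u')$ must be positive, and it is, because it is bounded below by $\omega(u\mid s)\mu(a\mid s,u)>0$. Under these conditions the two sides define the same probability vector on $\mathcal{A}(s)$.

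To connect the corollary to the policy $\rho$ of Eq.~\ref{eq:policy}, note that $\exp(\beta\log\pist(u\mid s,a))=\pist(u\mid s,a)^{\beta}$. So $\rho$ is the left-hand side of Eq.~\ref{eq:policy-app} whenever $\phi$ equals the exact reverse log-score.

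For the two limits, write the right-hand side as $\mathrm{softmax}(\beta\log r)$ over $\mathcal{A}(s)$:
\begin{itemize}
\item As $\beta\to\infty$, it concentrates uniformly on $\argmax_{a}r(a)$. By Eq.~\ref{eq:margins-app}, this is the same set as the argmax of the reverse score.
\item As $\beta\to0$, every weight tends to $1$, so the distribution tends to uniform on $\mathcal{A}(s)$.
\end{itemize}
I do not expect any genuine obstacle. The whole argument is the cancellation of an action-independent positive factor under normalization, plus attention to the support conditions so that no division by zero occurs.
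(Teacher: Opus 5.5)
Your proposal is correct and is essentially the paper's own proof. Like the paper, you use Proposition~\ref{thm:main} to write $\pist(u\mid s,a)=\omega(u\mid s)\,r(a)$, then cancel the action-independent positive factor $\omega(u\mid s)^{\beta}$ when normalizing over $\mathcal{A}(s)$. Your well-definedness check and explicit treatment of the two limits go beyond what the paper writes, with one small imprecision: the lower bound $\sum_{u'}\omega(u'\mid s)\mu(a\mid s,u')\ge\omega(u\mid s)\mu(a\mid s,u)>0$ and your $\beta\to0$ limit both need $\mu(a\mid s,u)>0$ for \emph{every} candidate, i.e.\ the full inclusion $\mathcal{A}(s)\subseteq\supp\mu(\cdot\mid s,u)$, which holds automatically for candidates proposed from $\pist$, not merely one candidate in the support.
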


\begin{proof}
By Proposition~\ref{thm:main}, for each $a$ the score $\pist(u\mid s,a)$ equals the faithful task-evidence ratio times the factor $\omega(u\mid s)$, which does not depend on $a$. 
Raising to $\beta$ and normalizing over $\mathcal{A}(s)$ removes this shared factor from both sides.
\end{proof}

\begin{remark}[Ratio versus conditional]
\label{rem:ratio-vs-conditional}
The right-hand side of Eq.\ref{eq:policy-app} is the softmax of the faithful task-evidence ratio $\mu(a\mid s,u)\big/\sum_{u'}\omega(u'\mid s)\,\mu(a\mid s,u')$, not of the faithful conditional $\mu(a\mid s,u)$ itself: the task-marginal likelihood $m(a)=\sum_{u'}\omega(u'\mid s)\,\mu(a\mid s,u')$ depends on the candidate and is not removed by normalization.
The two policies coincide when $m$ is constant over $\mathcal{A}(s)$, e.g. when the candidates are equally plausible under the task mixture, or in the argmax limit $\beta\to\infty$, where only the ordering survives.
Unconditionally, what is recovered exactly is the faithful task-evidence ratio's comparison of candidates: ordering and ties.
\end{remark}

\begin{remark}[On the weights $\omega$]
\label{rem:weights-app}
The faithful baseline on the right of Eq.\ref{eq:master-app} mixes tasks with weights $\omega(u\mid s)=p(u\mid s)/Z(s,u)$ rather than with the prior $p(u\mid s)$ itself, because the corruption rescales each task-conditional by its own normalizer $Z(s,u)$. 
Only two properties of $\omega$ are used: it is positive and it does not depend on the candidate action $a$. 
If $Z(s,u)$ happens to be constant in $u$, i.e. the corruption rescales every task-conditional by the same total mass, then $\omega\propto p$ and the baseline is exactly the task-marginal of $\mu$ under the original prior.
\end{remark}

\begin{remark}[Dropping the task prior]
\label{rem:full-coherence}
If the model's conditionals are coherent with the task posterior as well, i.e. $\pist(u\mid s,a)$ directly proportional to $p(u\mid s,a)\,\pist(a\mid s,u)$, then the same cancellation applies to $p(u\mid s,a)$ itself, and Proposition~\ref{thm:main} holds with $\omega(u\mid s)$ replaced by $p(u\mid s,a)/Z(s,u)$.
In this case the method requires only the single reverse score $\pist(u\mid s,a)$: no explicit task prior and no baseline forward pass enter at all.
\end{remark}

\subsection{Residual bias of the corrected policy}
\label{app:bias}

Remark~\ref{rem:ratio-vs-conditional} shows that the corrected policy $\rho$ is exactly the softmax of the faithful task-evidence ratio, and asks how far it remains from the faithful conditional itself.
This appendix answers the question: it proves Theorem~\ref{thm:bias} of Section~\ref{sec:method-bias}, via a sharp range lemma (Lemma~\ref{lem:range}).
Throughout, fix a state $s$, a task $u$ in the support of $p(\cdot\mid s)$, and a finite candidate set $\cA\subseteq\supp\mu(\cdot\mid s,u)$ with $|\cA|\ge2$; recall $\rho$ from Eq.\ref{eq:policy}, the positive task weights $\omega$ from Proposition~\ref{thm:main}, the task-marginal likelihood $m(a)=\sum_{u'}\omega(u'\mid s)\,\mu(a\mid s,u')$, and the restricted distributions and spreads of Eq.\ref{eq:lam-task}.

\paragraph{Support identity.}
Since $e^{\varepsilon(s,a)}>0$, Assumption~\ref{ass:corruption} gives $\pist(a\mid s,u)>0$ if and only if $\mu(a\mid s,u)>0$.
Hence a candidate set proposed from $\pist(\cdot\mid s,u)$ automatically satisfies $\cA\subseteq\supp\mu(\cdot\mid s,u)$, and on $\cA$ we have $\mu(a\mid s,u)>0$ as well as $m(a)\ge\omega(u\mid s)\,\mu(a\mid s,u)>0$.
Moreover, by Eq.\ref{eq:corruption},
\begin{equation}
\label{eq:piA}
\pi_{\cA}(a)
\;=\;
\frac{\mu_{\cA}(a)\,e^{\varepsilon(s,a)}}{\sum_{a'\in\cA}\mu_{\cA}(a')\,e^{\varepsilon(s,a')}},
\qquad a\in\cA,
\end{equation}
since the normalizer $Z(s,u)$ of Eq.\ref{eq:corruption} cancels between numerator and denominator: the corrupted restriction is the faithful restriction tilted by the corruption.

\paragraph{A sharp range bound.}
Lemma~\ref{lem:range} below quantifies a simple fact: two distributions whose likelihood ratio is confined to a band cannot be far apart in total variation, and the worst case for a band of logarithmic width $\Delta$ is exactly $\tanh(\Delta/4)$.
The key input is a sharp result of \citet{Binette_2019} (Corollary~5): if the likelihood ratio $dP/dQ$ has essential infimum $m$ and essential supremum $M$, then $\dtv(P,Q)\le\frac{(M-1)(1-m)}{M-m}$, with the best possible constant for fixed $(m,M)$.
Lemma~\ref{lem:range} is what this bound becomes under the scale-invariant constraint that only the \emph{range} of $\log(dQ/dP)$, and not the location of the band, is known.

\begin{lemma}[Range bound, sharp]
\label{lem:range}
Let $P$ and $Q$ be strictly positive probability distributions on a finite set $S$, and suppose the log-likelihood ratio has range at most $\Delta$:
\begin{equation}
\label{eq:range}
\max_{x\in S}\log\frac{Q(x)}{P(x)}
\;-\;
\min_{x\in S}\log\frac{Q(x)}{P(x)}
\;\le\;\Delta .
\end{equation}
Then
\begin{equation}
\label{eq:tvbound}
\dtv(P,Q)\;\le\;\tanh\!\bigl(\Delta/4\bigr),
\qquad\text{where}\qquad
\dtv(P,Q):=\tfrac12\sum_{x\in S}|P(x)-Q(x)|.
\end{equation}
The constant is sharp: for every $\Delta$ there is a two-point family attaining equality.
\end{lemma}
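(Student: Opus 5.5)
We prove Lemma~\ref{lem:range} in two steps. First we reduce it to the Binette bound quoted above, then we optimize over the unknown location of the band.

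Set $r(x):=Q(x)/P(x)>0$ and let $m:=\min_x r(x)$, $M:=\max_x r(x)$. Since $\sum_x P(x)r(x)=\sum_x Q(x)=1$, the $P$-average of $r$ equals $1$. Hence $m\le 1\le M$, and the range hypothesis Eq.~\ref{eq:range} says $M/m\le e^{\Delta}$. If $m=M$, then $P=Q$ and there is nothing to prove. Otherwise Binette's Corollary~5, applied on the finite set $S$, gives
\[
\dtv(P,Q)\le\frac{(M-1)(1-m)}{M-m}.
\]
To stay self-contained, we would also note the direct argument. With $A:=\{r>1\}$, we have $\dtv=Q(A)-P(A)$, and among all $r$ with values in $[m,M]$ and $P$-mean $1$ this quantity is maximized by a two-valued $r$. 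Solving the two-point case explicitly reproduces the displayed expression.

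The main step is a one-dimensional optimization. Only the ratio $M/m$ is constrained, not the location of the band, so we parametrize $m=e^{-t}$ and $M=e^{\delta-t}$ with $\delta\le\Delta$ and $t\in[0,\delta]$. Because the bound is increasing in $M$ for fixed $m$ and decreasing in $m$ for fixed $M$, we may take $\delta=\Delta$. We then maximize
\[
g(t)=\frac{(e^{\Delta-t}-1)(1-e^{-t})}{e^{\Delta-t}-e^{-t}}.
\]
Multiplying numerator and denominator by $e^{t}$ gives $g(t)=(e^{\Delta}-e^{t})(1-e^{-t})/(e^{\Delta}-1)$. Its numerator equals $e^{\Delta}+1-e^{t}-e^{\Delta-t}$, which is maximized at $t=\Delta/2$, where it takes the value $(e^{\Delta/2}-1)^2$. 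The maximum is therefore
\[
\frac{(e^{\Delta/2}-1)^2}{(e^{\Delta/2}-1)(e^{\Delta/2}+1)}=\frac{e^{\Delta/2}-1}{e^{\Delta/2}+1}=\tanh(\Delta/4),
\]
which proves Eq.~\ref{eq:tvbound}. This is the step I expect to need the most care: the monotonicity reduction to $\delta=\Delta$ and the check that the symmetric point is the global maximum on $[0,\Delta]$. The latter holds because $e^{t}+e^{\Delta-t}$ is convex and symmetric about $\Delta/2$.

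For sharpness, take $S=\{x_1,x_2\}$ with $P=(\tfrac12,\tfrac12)$ and $Q=(\sigma,1-\sigma)$, where $\sigma=e^{\Delta/2}/(e^{\Delta/2}+1)$. Then $Q(x_1)/P(x_1)=2\sigma$ and $Q(x_2)/P(x_2)=2(1-\sigma)$, so the log-ratio range is $\log\bigl(\sigma/(1-\sigma)\bigr)=\Delta$. In this case $\dtv=\sigma-\tfrac12=\tanh(\Delta/4)$, matching the optimum $t=\Delta/2$, so equality holds for every $\Delta$.
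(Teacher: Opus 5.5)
Your proof of the inequality takes the same route as the paper: Binette's Corollary~5, then optimizing over where the band $[m,M]$ with $M/m\le e^{\Delta}$ sits. The computation is correct. The reduction to $\delta=\Delta$ uses $\partial_M\frac{(M-1)(1-m)}{M-m}=\frac{(1-m)^2}{(M-m)^2}\ge 0$. The function $g(t)=(e^{\Delta}+1-e^{t}-e^{\Delta-t})/(e^{\Delta}-1)$ peaks at $t=\Delta/2$ with value $\tanh(\Delta/4)$. If you want the Binette step self-contained, a precise version of your ``two-valued $r$'' remark is the chord bound $(r-1)_+\le (M-1)(r-m)/(M-m)$ on $[m,M]$. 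Its $P$-expectation is exactly $(M-1)(1-m)/(M-m)$, and $\dtv(P,Q)=\E_P[(r-1)_+]$.

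The sharpness example is wrong, so the ``sharp'' part of the statement is not established. With $P=(\tfrac12,\tfrac12)$, $Q=(\sigma,1-\sigma)$ and $\sigma=e^{\Delta/2}/(e^{\Delta/2}+1)$, you get $\sigma/(1-\sigma)=e^{\Delta/2}$. So the log-ratio range is $\Delta/2$, not $\Delta$, and $\dtv(P,Q)=\sigma-\tfrac12=\tfrac12\tanh(\Delta/4)$, not $\tanh(\Delta/4)$. No pair with uniform $P$ can attain the bound. Forcing the range to be exactly $\Delta$ gives $Q=(e^{\Delta}/(1+e^{\Delta}),\,1/(1+e^{\Delta}))$ and $\dtv=\tfrac12\tanh(\Delta/2)=\tanh(\Delta/4)/(1+\tanh^2(\Delta/4))<\tanh(\Delta/4)$ for $\Delta>0$. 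Your own optimization tells you the extremizer. The ratio $r$ must take the two values $M=e^{\Delta/2}$ and $m=e^{-\Delta/2}$ (this is $t=\Delta/2$), and $\E_P[r]=1$ then forces $P(x_1)=(1-m)/(M-m)=1/(e^{\Delta/2}+1)=1-\sigma$. So take $P=(1-\sigma,\sigma)$ and $Q=(\sigma,1-\sigma)$. The log-ratios are $\pm\Delta/2$, so the range is $\Delta$, and $\dtv=2\sigma-1=\tanh(\Delta/4)$, which gives the claimed equality.
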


Lemma~\ref{lem:range} is obtained by Corollary 5 of \citet{Binette_2019}, followed by optimizing over the location of the bounded likelihood-ratio interval.

\begin{proof}[Proof of Theorem~\ref{thm:bias}]
\emph{(i)} By Eq.\ref{eq:policy-app}, $\rho(a)=R(a)\big/\sum_{a'\in\cA}R(a')$ with $R(a)=(\mu(a\mid s,u)/m(a))^{\beta}$.
Hence for every $a\in\cA$,
\begin{equation}
\label{eq:logratio}
\log\frac{\rho(a)}{\mu_{\cA}(a)}
=(\beta-1)\log\mu(a\mid s,u)-\beta\log m(a)
+\underbrace{\Bigl(\log\mu(\cA\mid s,u)-\log\textstyle\sum_{a'\in\cA}R(a')\Bigr)}_{\text{constant in }a}.
\end{equation}
The range of $\log(\rho/\mu_{\cA})$ over $\cA$ is therefore the range of $(\beta-1)\log\mu(\cdot\mid s,u)-\beta\log m$, which is at most $|\beta-1|\,\Lambda_\mu+\beta\Lambda_m$.
Lemma~\ref{lem:range}, applied to $P=\mu_{\cA}$ and $Q=\rho$ (both strictly positive on $\cA$ by the support identity), gives the first inequality in Eq.\ref{eq:revbound}.

For the second inequality we show $\Lambda_m\le\Lambda_{\mathrm{task}}$.
Fix $a_1,a_2\in\cA$ and let $S_2:=\{u'\in\supp p(\cdot\mid s):\mu(a_2\mid s,u')>0\}$.
If some $u'\notin S_2$ has $\mu(a_1\mid s,u')>0$, then $\log\mu(a_1\mid s,u')-\log\mu(a_2\mid s,u')=+\infty$, so $\Lambda_{\mathrm{task}}=+\infty$ and the claim is vacuous.
Otherwise $\mu(a_1\mid s,u')=0$ off $S_2$, and the mediant inequality (a weighted mean of ratios lies below the largest ratio) gives
\begin{equation*}
\frac{m(a_1)}{m(a_2)}
=\frac{\sum_{u'\in S_2}\omega(u'\mid s)\,\mu(a_2\mid s,u')\,\bigl[\mu(a_1\mid s,u')/\mu(a_2\mid s,u')\bigr]}
{\sum_{u'\in S_2}\omega(u'\mid s)\,\mu(a_2\mid s,u')}
\;\le\;
\max_{u'\in S_2}\frac{\mu(a_1\mid s,u')}{\mu(a_2\mid s,u')},
\end{equation*}
so $\log m(a_1)-\log m(a_2)\le\max_{u'}\bigl[\log\mu(a_1\mid s,u')-\log\mu(a_2\mid s,u')\bigr]\le\Lambda_{\mathrm{task}}$.
Taking the maximum over $a_1,a_2\in\cA$ yields $\Lambda_m\le\Lambda_{\mathrm{task}}$, and $\tanh$ is increasing.

\medskip
\emph{(ii)} By Eq.\ref{eq:piA}, writing $\varepsilon(a):=\varepsilon(s,a)$ and $\varepsilon^{+}:=\varepsilon(a^{+})$,
\begin{equation*}
\pi_{\cA}(a^{+})
=\frac{\gamma\,e^{\varepsilon^{+}}}{\sum_{a\in\cA}\mu_{\cA}(a)e^{\varepsilon(a)}}
\;\ge\;
\frac{\gamma\,e^{\varepsilon^{+}}}{\gamma e^{\varepsilon^{+}}+(1-\gamma)e^{\varepsilon^{+}-\Delta_2}}
=\frac{\gamma}{\gamma+(1-\gamma)e^{-\Delta_2}},
\end{equation*}
since $\varepsilon(a)\le\varepsilon^{+}-\Delta_2$ for all $a\ne a^{+}$ by definition of $\Delta_2$; equality holds when $|\cA|=2$, as there is only one other candidate.
Moreover $\pi_{\cA}(a^{+})\ge\gamma=\mu_{\cA}(a^{+})$, because $\sum_a\mu_{\cA}(a)e^{\varepsilon(a)}\le e^{\varepsilon^{+}}$.
Taking the event $E=\{a^{+}\}$ in $\dtv(\mu_{\cA},\pi_{\cA})=\sup_E\bigl(\pi_{\cA}(E)-\mu_{\cA}(E)\bigr)$,
\begin{equation*}
\dtv(\mu_{\cA},\pi_{\cA})
\;\ge\;\pi_{\cA}(a^{+})-\gamma
\;\ge\;\frac{\gamma}{\gamma+(1-\gamma)e^{-\Delta_2}}-\gamma
=\frac{\gamma(1-\gamma)\bigl(1-e^{-\Delta_2}\bigr)}{\gamma+(1-\gamma)e^{-\Delta_2}} .
\end{equation*}
For the monotonicity claim, write $x=e^{-\Delta_2}$: the right-hand side is $\gamma(1-\gamma)(1-x)/(\gamma+(1-\gamma)x)$, with derivative $-\gamma(1-\gamma)/(\gamma+(1-\gamma)x)^{2}<0$ in $x$; hence it increases in $\Delta_2$ and tends to $\gamma(1-\gamma)/\gamma=1-\gamma$ as $x\to0$.

\end{proof}

\begin{remark}[Finite-$M$ estimation enters additively]
\label{rem:finite-m}
Theorem~\ref{thm:bias} analyzes the exact reverse score $\phi(a)=\log\pist(u\mid s,a)$, as does Proposition~\ref{thm:main}; the algorithm uses the Monte Carlo estimate $\hat\phi$ of Eq.\ref{eq:score}.
If the estimation error $\delta(a):=\hat\phi(a)-\phi(a)$ satisfies $\sup_{a\in\cA}|\delta(a)|\le\delta_{\max}$, then for the estimated policy $\hat\rho(a)\propto e^{\beta\hat\phi(a)}$,
\begin{equation*}
\log\frac{\hat\rho(a)}{\rho(a)}
=\beta\,\delta(a)-\log\E_{\rho}\bigl[e^{\beta\delta}\bigr]
\qquad\Longrightarrow\qquad
\dtv(\hat\rho,\rho)\le\tanh\!\bigl(\beta\delta_{\max}/2\bigr),
\end{equation*}
by Lemma~\ref{lem:range}, since the range of $\beta\delta$ over $\cA$ is at most $2\beta\delta_{\max}$.
Hence $\dtv(\hat\rho,\mu_{\cA})\le\tanh(\beta\delta_{\max}/2)+B$: estimation noise degrades the corruption-free guarantee gracefully and vanishes as the estimator becomes consistent.
\end{remark}

\clearpage
\section{Hyperparameters}
\label{app:hyperparameters}
Table~\ref{tab:hyperparameters} summarizes the hyperparameters used for
action generation and reflection. On ScienceWorld and Jericho,
\textbf{Reflect Reverse} uses more Monte Carlo sampling steps than
\textbf{Reflect Forward} because the task description is longer than
the candidate actions at each interaction step, resulting in more tokens
to evaluate during reverse reflection.

Due to iLLaDA's pre-training configuration, the assistant prefill and end-of-thinking sequence is
\texttt{</think>} followed by a newline and \texttt{Thought:}, while the
visible response begins with \texttt{Thought:} followed by a space.
\begin{table*}[!htp]
    \centering
    \small
    \setlength{\tabcolsep}{5pt}
    \renewcommand{\arraystretch}{1.12}
    \caption{Evaluation hyperparameters for LLaDA and iLLaDA on
    ALFWorld, ScienceWorld, BabyAI, and Jericho. The upper table reports
    model settings and the lower table reports forward
    and reverse likelihood-scoring parameters}
    \label{tab:hyperparameters}
    \begin{tabular}{@{}lcc@{}}
        \toprule
        \textbf{Hyperparameters} & \textbf{LLaDA} & \textbf{iLLaDA} \\
        \midrule
        Generation length (tokens)               & 128   & 128 \\
        Requested diffusion steps                & 128   & 128 \\
        Block size (tokens)                      & 32    & 32 \\
        Threshold                                & 0.9   & 0.9 \\
        Dual cache                               & On    & Off \\
        Configured context length (tokens)       & 4,000 & 8,192 \\
        Temperature                             & $\{0.8, 0.9\}$ & $\{0.9, 1.0\}$ \\
        Seeds                                   & $\{12, 22, 32\}$ & $\{12, 22, 32\}$ \\
        Maximum agent steps per episode          & 30    & 30 \\
        Generation attempt limit                 & 20    & 20 \\
        \bottomrule
    \end{tabular}

    \par\medskip

    \begin{tabular}{@{}lcccc@{}}
        \toprule
        & \multicolumn{2}{c}{\textbf{Reflect Forward}}
        & \multicolumn{2}{c}{\textbf{Reflect Reverse}} \\
        \cmidrule(lr){2-3}\cmidrule(l){4-5}
        \textbf{Benchmark} & \textbf{MC rollouts} & \textbf{Batch size}
        & \textbf{MC rollouts} & \textbf{Batch size} \\
        \midrule
        ALFWorld     & 64 & 16 & 64  & 16 \\
        ScienceWorld & 64 & 16 & 128 & 16 \\
        BabyAI       & 64 & 16 & 64  & 16 \\
        Jericho      & 64 & 16 & 128 & 16 \\
        \bottomrule
    \end{tabular}
\end{table*}

\clearpage
\section{Supplementary experiment results}
\label{app:rsupplementary}

We provide the full experimental results omitted from the main paper for space. Tables~\ref{tab:agentboard-results} and~\ref{tab:agentboard-iLLaDA-results} report performance across all benchmarks, model variants, methods, and sampling temperatures, offering a more detailed view of the results summarized in the main text.

\begin{table}[!htbp]
  \centering
  \caption{
    Success rate (SR) and progress rate (PR) evaluated by LLaDA-8B-Instruct.
    Entries are mean $\pm$ sample standard deviation.
    Boldface marks the best method within each benchmark and
    temperature category (including Mean), with ties included.
  }
  \label{tab:agentboard-results}

  \begingroup
  \footnotesize
  \setlength{\tabcolsep}{2.5pt}
  \renewcommand{\arraystretch}{1.02}

  \begin{tabularx}{\linewidth}{@{}lXcrr@{}}
    \toprule[1pt]
    Benchmark & Method & Temperature & SR (\%) & PR (\%) \\
    \midrule

    \textbf{ALFWorld}
      & One Pass & 0.8
      & $6.47 \pm 2.40$
      & $26.87 \pm 1.46$ \\

      & & 0.9
      & $8.71 \pm 1.55$
      & $29.33 \pm 0.79$ \\

      & & \emph{Mean}
      & $7.59 \pm 2.18$
      & $28.10 \pm 1.71$ \\

    \cmidrule[0.3pt]{2-5}

      & Reflect Forward & 0.8
      & $7.46 \pm 1.49$
      & $29.73 \pm 2.18$ \\

      & & 0.9
      & $8.96 \pm 0.75$
      & $31.45 \pm 1.02$ \\

      & & \emph{Mean}
      & $8.21 \pm 1.33$
      & $30.59 \pm 1.79$ \\

    \cmidrule[0.3pt]{2-5}

      & \textbf{Reflect Reverse} & 0.8
      & $\mathbf{12.19} \pm 2.28$
      & $\mathbf{34.29} \pm 3.03$ \\

      & & 0.9
      & $\mathbf{16.17} \pm 2.28$
      & $\mathbf{36.24} \pm 1.01$ \\

      & & \emph{Mean}
      & $\mathbf{14.18} \pm 2.99$
      & $\mathbf{35.26} \pm 2.29$ \\

    \specialrule{1pt}{2.5pt}{2.5pt}

    \textbf{ScienceWorld}
      & One Pass & 0.8
      & $3.70 \pm 1.70$
      & $21.69 \pm 4.42$ \\

      & & 0.9
      & $4.07 \pm 0.64$
      & $\mathbf{23.02} \pm 1.21$ \\

      & & \emph{Mean}
      & $3.89 \pm 1.17$
      & $22.36 \pm 2.99$ \\

    \cmidrule[0.3pt]{2-5}

      & Reflect Forward & 0.8
      & $3.70 \pm 1.28$
      & $20.03 \pm 2.37$ \\

      & & 0.9
      & $5.56 \pm 1.92$
      & $22.26 \pm 2.34$ \\

      & & \emph{Mean}
      & $4.63 \pm 1.78$
      & $21.15 \pm 2.43$ \\

    \cmidrule[0.3pt]{2-5}

      & \textbf{Reflect Reverse} & 0.8
      & $\mathbf{7.78} \pm 1.92$
      & $\mathbf{23.24} \pm 2.91$ \\

      & & 0.9
      & $\mathbf{5.93} \pm 1.28$
      & $22.41 \pm 1.26$ \\

      & & \emph{Mean}
      & $\mathbf{6.85} \pm 1.78$
      & $\mathbf{22.82} \pm 2.06$ \\

    \specialrule{1pt}{2.5pt}{2.5pt}

    \textbf{BabyAI}
      & One Pass & 0.8
      & $14.88 \pm 0.52$
      & $25.97 \pm 0.66$ \\

      & & 0.9
      & $14.29 \pm 0.89$
      & $27.46 \pm 1.02$ \\

      & & \emph{Mean}
      & $14.58 \pm 0.73$
      & $26.71 \pm 1.12$ \\

    \cmidrule[0.3pt]{2-5}

      & Reflect Forward & 0.8
      & $\mathbf{18.75} \pm 4.72$
      & $\mathbf{30.41} \pm 3.19$ \\

      & & 0.9
      & $16.37 \pm 1.03$
      & $\mathbf{28.39} \pm 0.77$ \\

      & & \emph{Mean}
      & $\mathbf{17.56} \pm 3.32$
      & $\mathbf{29.40} \pm 2.35$ \\

    \cmidrule[0.3pt]{2-5}

      & \textbf{Reflect Reverse} & 0.8
      & $17.86 \pm 1.79$
      & $26.55 \pm 1.17$ \\

      & & 0.9
      & $\mathbf{17.26} \pm 3.61$
      & $27.12 \pm 1.72$ \\

      & & \emph{Mean}
      & $\mathbf{17.56} \pm 2.57$
      & $26.83 \pm 1.35$ \\

    \specialrule{1pt}{2.5pt}{2.5pt}

    \textbf{Jericho}
      & One Pass & 0.8
      & $\mathbf{1.67} \pm 2.89$
      & $19.59 \pm 6.37$ \\

      & & 0.9
      & $0.00 \pm 0.00$
      & $20.35 \pm 3.17$ \\

      & & \emph{Mean}
      & $0.83 \pm 2.04$
      & $19.97 \pm 4.52$ \\

    \cmidrule[0.3pt]{2-5}

      & Reflect Forward & 0.8
      & $0.00 \pm 0.00$
      & $17.52 \pm 2.50$ \\

      & & 0.9
      & $\mathbf{3.33} \pm 2.89$
      & $20.94 \pm 1.39$ \\

      & & \emph{Mean}
      & $\mathbf{1.67} \pm 2.58$
      & $19.23 \pm 2.61$ \\

    \cmidrule[0.3pt]{2-5}

      & \textbf{Reflect Reverse} & 0.8
      & $0.00 \pm 0.00$
      & $\mathbf{20.53} \pm 2.22$ \\

      & & 0.9
      & $\mathbf{3.33} \pm 5.77$
      & $\mathbf{26.72} \pm 2.47$ \\

      & & \emph{Mean}
      & $\mathbf{1.67} \pm 4.08$
      & $\mathbf{23.63} \pm 3.99$ \\

    \bottomrule[1pt]
  \end{tabularx}

  \endgroup
\end{table}

\begin{table}[!htbp]
  \centering
  \caption{
    Success rate (SR) and progress rate (PR) evaluated by iLLaDA-8B-Instruct.
    Entries are mean $\pm$ sample standard deviation.
    Boldface marks the best method within each benchmark and
    temperature category (including Mean), with ties included.
  }
  \label{tab:agentboard-iLLaDA-results}

  \begingroup
  \footnotesize
  \setlength{\tabcolsep}{2.5pt}
  \renewcommand{\arraystretch}{1.02}

  \begin{tabularx}{\linewidth}{@{}lXcrr@{}}
    \toprule[1pt]
    Benchmark & Method & Temperature & SR (\%) & PR (\%) \\
    \midrule

    \textbf{ALFWorld}
      & One Pass & 0.9
      & $5.47 \pm 0.43$
      & $19.57 \pm 1.76$ \\

      & & 1.0
      & $4.73 \pm 0.43$
      & $21.06 \pm 1.81$ \\

      & & \emph{Mean}
      & $5.10 \pm 0.56$
      & $20.32 \pm 1.79$ \\

    \cmidrule[0.3pt]{2-5}

      & Reflect Forward & 0.9
      & $0.75 \pm 0.75$
      & $15.80 \pm 1.39$ \\

      & & 1.0
      & $0.75 \pm 1.29$
      & $16.36 \pm 1.97$ \\

      & & \emph{Mean}
      & $0.75 \pm 0.94$
      & $16.08 \pm 1.56$ \\

    \cmidrule[0.3pt]{2-5}

      & \textbf{Reflect Reverse} & 0.9
      & $\mathbf{9.95} \pm 1.88$
      & $\mathbf{27.69} \pm 2.86$ \\

      & & 1.0
      & $\mathbf{10.20} \pm 1.14$
      & $\mathbf{28.71} \pm 1.55$ \\

      & & \emph{Mean}
      & $\mathbf{10.07} \pm 1.40$
      & $\mathbf{28.20} \pm 2.13$ \\

    \specialrule{1pt}{2.5pt}{2.5pt}

    \textbf{ScienceWorld}
      & One Pass & 0.9
      & $0.00 \pm 0.00$
      & $15.77 \pm 0.22$ \\

      & & 1.0
      & $0.37 \pm 0.64$
      & $16.47 \pm 1.73$ \\

      & & \emph{Mean}
      & $0.19 \pm 0.45$
      & $16.12 \pm 1.17$ \\

    \cmidrule[0.3pt]{2-5}

      & Reflect Forward & 0.9
      & $0.37 \pm 0.64$
      & $7.70 \pm 1.21$ \\

      & & 1.0
      & $0.00 \pm 0.00$
      & $9.82 \pm 1.48$ \\

      & & \emph{Mean}
      & $0.19 \pm 0.45$
      & $8.76 \pm 1.68$ \\

    \cmidrule[0.3pt]{2-5}

      & \textbf{Reflect Reverse} & 0.9
      & $\mathbf{3.70} \pm 1.28$
      & $\mathbf{25.95} \pm 2.43$ \\

      & & 1.0
      & $\mathbf{4.07} \pm 2.31$
      & $\mathbf{25.39} \pm 0.84$ \\

      & & \emph{Mean}
      & $\mathbf{3.89} \pm 1.69$
      & $\mathbf{25.67} \pm 1.66$ \\

    \specialrule{1pt}{2.5pt}{2.5pt}

    \textbf{BabyAI}
      & One Pass & 0.9
      & $9.23 \pm 2.25$
      & $22.65 \pm 2.07$ \\

      & & 1.0
      & $11.61 \pm 2.36$
      & $25.13 \pm 2.82$ \\

      & & \emph{Mean}
      & $10.42 \pm 2.44$
      & $23.89 \pm 2.60$ \\

    \cmidrule[0.3pt]{2-5}

      & Reflect Forward & 0.9
      & $8.93 \pm 0.89$
      & $21.52 \pm 1.15$ \\

      & & 1.0
      & $8.04 \pm 1.55$
      & $18.44 \pm 2.49$ \\

      & & \emph{Mean}
      & $8.48 \pm 1.23$
      & $19.98 \pm 2.42$ \\

    \cmidrule[0.3pt]{2-5}

      & \textbf{Reflect Reverse} & 0.9
      & $\mathbf{15.77} \pm 2.25$
      & $\mathbf{30.73} \pm 1.47$ \\

      & & 1.0
      & $\mathbf{20.24} \pm 1.36$
      & $\mathbf{35.40} \pm 0.36$ \\

      & & \emph{Mean}
      & $\mathbf{18.01} \pm 2.96$
      & $\mathbf{33.07} \pm 2.73$ \\

    \specialrule{1pt}{2.5pt}{2.5pt}

    \textbf{Jericho}
      & One Pass & 0.9
      & $\mathbf{1.67} \pm 2.89$
      & $11.58 \pm 2.39$ \\

      & & 1.0
      & $\mathbf{1.67} \pm 2.89$
      & $11.62 \pm 2.61$ \\

      & & \emph{Mean}
      & $\mathbf{1.67} \pm 2.58$
      & $11.60 \pm 2.24$ \\

    \cmidrule[0.3pt]{2-5}

      & Reflect Forward & 0.9
      & $0.00 \pm 0.00$
      & $9.63 \pm 1.31$ \\

      & & 1.0
      & $0.00 \pm 0.00$
      & $11.51 \pm 1.08$ \\

      & & \emph{Mean}
      & $0.00 \pm 0.00$
      & $10.57 \pm 1.49$ \\

    \cmidrule[0.3pt]{2-5}

      & \textbf{Reflect Reverse} & 0.9
      & $0.00 \pm 0.00$
      & $\mathbf{11.93} \pm 1.25$ \\

      & & 1.0
      & $\mathbf{1.67} \pm 2.89$
      & $\mathbf{21.37} \pm 2.69$ \\

      & & \emph{Mean}
      & $0.83 \pm 2.04$
      & $\mathbf{16.65} \pm 5.50$ \\

    \bottomrule[1pt]
  \end{tabularx}

  \endgroup
\end{table}

\FloatBarrier

\section{Retry state examples}
\label{app:retrystateexample}
Tables~\ref{tab:alfworld-retry-64} and~\ref{tab:alfworld-retry-23} present two representative retry-state patterns from ALFWorld: single-action repetition and two-action alternation. Similar retry patterns are also frequently observed across other multi-turn benchmarks.
\begingroup
\footnotesize

\definecolor{retryred}{HTML}{C62828}

\setlength{\tabcolsep}{6pt}
\setlength{\LTleft}{0pt plus \textwidth}
\setlength{\LTright}{0pt plus \textwidth}
\setlength{\LTcapwidth}{\linewidth}

\begin{longtable}{
  @{}>{\raggedright\arraybackslash}p{\linewidth}@{}}

\caption{Single-action example. A and O denote the action and its resulting observation. The repetition loop is marked in red including the initial action of the loop.}
\label{tab:alfworld-retry-64}\\

\toprule
\textbf{Look at pencil under the desklamp.} \\
\midrule
\endfirsthead

\toprule
\textbf{look at pencil under the desklamp.}
\textit{(continued)} \\
\midrule
\endhead

\midrule
\multicolumn{1}{r}{
  \textit{Continued on next page}
}\\
\endfoot

\bottomrule
\endlastfoot

{
  \textbf{A:} \texttt{go to desk 1}
  \par
  \textbf{O:}
  On the desk 1, you see a bowl 2, a creditcard 2,
  a desklamp 1, a laptop 1, a mug 1, and a pen 2.
  \par
}\\[3pt]
\mbox{}\\

{
  \textbf{A:} \texttt{examine desk 1}
  \par
  \textbf{O:}
  On the desk 1, you see a bowl 2, a creditcard 2,
  a desklamp 1, a laptop 1, a mug 1, and a pen 2.
  \par
}\\[3pt]
\mbox{}\\

{
  \textbf{A:} \texttt{go to desk 2}
  \par
  \textbf{O:}
  On the desk 2, you see a alarmclock 1,
  a bowl 3, and a pencil 1.
  \par
}\\[3pt]
\mbox{}\\

{
  \color{retryred}
  \textbf{A:} \texttt{examine desk 2}
  \par
  \textbf{O:}
  On the desk 2, you see a alarmclock 1,
  a bowl 3, and a pencil 1.
  \par
}\\[3pt]
\mbox{}\\

{
  \color{retryred}
  \textbf{A:} \texttt{examine desk 2}
  \par
  \textbf{O:}
  On the desk 2, you see a alarmclock 1,
  a bowl 3, and a pencil 1.
  \par
}\\[3pt]
\mbox{}\\

{
  \color{retryred}
  \textbf{A:} \texttt{examine desk 2}
  \par
  \textbf{O:}
  On the desk 2, you see a alarmclock 1,
  a bowl 3, and a pencil 1.
  \par
}\\[3pt]
\mbox{}\\

{
  \color{retryred}
  \textbf{A:} \texttt{examine desk 2}
  \par
  \textbf{O:}
  On the desk 2, you see a alarmclock 1,
  a bowl 3, and a pencil 1.
  \par
}\\[3pt]
\mbox{}\\

{
  \color{retryred}
  \textbf{A:} \texttt{examine desk 2}
  \par
  \textbf{O:}
  On the desk 2, you see a alarmclock 1,
  a bowl 3, and a pencil 1.
  \par
}\\[3pt]
\mbox{}\\

{
  \color{retryred}
  \textbf{A:} \texttt{examine desk 2}
  \par
  \textbf{O:}
  On the desk 2, you see a alarmclock 1,
  a bowl 3, and a pencil 1.
  \par
}\\[3pt]
\mbox{}\\

{
  \textbf{A:}
  \texttt{take pencil 1 from desk 2}
  \par
  \textbf{O:}
  You pick up the pencil 1 from the desk 2.
  \par
}\\[3pt]
\mbox{}\\

{
  \textbf{A:} \texttt{go to desk 1}
  \par
  \textbf{O:}
  On the desk 1, you see a bowl 2, a creditcard 2,
  a desklamp 1, a laptop 1, a mug 1, and a pen 2.
  \par
}\\[3pt]
\mbox{}\\

{
  \textbf{A:} \texttt{use desklamp 1}
  \par
  \textbf{O:}
  You turn on the desklamp 1.
  \par
}\\[3pt]

\end{longtable}

\begin{longtable}{
  @{}>{\raggedright\arraybackslash}p{\linewidth}@{}
}

\caption{Alternating-action example. A and O denote the action and its resulting observation. The repetition loop is marked in red including the pair of the loop.}
\label{tab:alfworld-retry-23}\\

\toprule
\textbf{Put a clean egg in microwave.} \\
\midrule
\endfirsthead

\toprule
\textbf{Put a clean egg in microwave.}
\textit{(continued)} \\
\midrule
\endhead

\midrule
\multicolumn{1}{r}{
  \textit{Continued on next page}
}\\
\endfoot

\bottomrule
\endlastfoot

{
  \textbf{A:} \texttt{go to fridge 1}
  \par
  \textbf{O:}
  The fridge 1 is closed.
  \par
}\\[3pt]
\mbox{}\\

{
  \textbf{A:} \texttt{open fridge 1}
  \par
  \textbf{O:}
  You open the fridge 1. The fridge 1 is open.
  In it, you see a apple 2, a egg 1, a lettuce 1,
  a pan 2, a plate 1, and a tomato 1.
  \par
}\\[3pt]
\mbox{}\\

{
  \textbf{A:} \texttt{take egg 1 from fridge 1}
  \par
  \textbf{O:}
  You pick up the egg 1 from the fridge 1.
  \par
}\\[3pt]
\mbox{}\\

{
  \textbf{A:} \texttt{put egg 1 in/on fridge 1}
  \par
  \textbf{O:}
  You put the egg 1 in/on the fridge 1.
  \par
}\\[3pt]
\mbox{}\\

{
  \color{retryred}
  \textbf{A:} \texttt{close fridge 1}
  \par
  \textbf{O:}
  You close the fridge 1.
  \par
}\\[3pt]
\mbox{}\\

{
  \color{retryred}
  \textbf{A:} \texttt{open fridge 1}
  \par
  \textbf{O:}
  You open the fridge 1. The fridge 1 is open.
  In it, you see a apple 2, a egg 1, a lettuce 1,
  a pan 2, a plate 1, and a tomato 1.
  \par
}\\[3pt]
\mbox{}\\

{
  \color{retryred}
  \textbf{A:} \texttt{close fridge 1}
  \par
  \textbf{O:}
  You close the fridge 1.
  \par
}\\[3pt]
\mbox{}\\

{
  \color{retryred}
  \textbf{A:} \texttt{open fridge 1}
  \par
  \textbf{O:}
  You open the fridge 1. The fridge 1 is open.
  In it, you see a apple 2, a egg 1, a lettuce 1,
  a pan 2, a plate 1, and a tomato 1.
  \par
}\\[3pt]
\mbox{}\\

{
  \color{retryred}
  \textbf{A:} \texttt{close fridge 1}
  \par
  \textbf{O:}
  You close the fridge 1.
  \par
}\\[3pt]
\mbox{}\\

{
  \color{retryred}
  \textbf{A:} \texttt{open fridge 1}
  \par
  \textbf{O:}
  You open the fridge 1. The fridge 1 is open.
  In it, you see a apple 2, a egg 1, a lettuce 1,
  a pan 2, a plate 1, and a tomato 1.
  \par
}\\[3pt]
\mbox{}\\

{
  \textbf{A:} \texttt{take egg 1 from fridge 1}
  \par
  \textbf{O:}
  You pick up the egg 1 from the fridge 1.
  \par
}\\[3pt]
\mbox{}\\

{
  \textbf{A:} \texttt{put egg 1 in/on fridge 1}
  \par
  \textbf{O:}
  You put the egg 1 in/on the fridge 1.
  \par
}\\[3pt]
\mbox{}\\

{
  \textbf{A:} \texttt{go to microwave 1}
  \par
  \textbf{O:}
  The microwave 1 is closed.
  \par
}\\[3pt]
\mbox{}\\

{
  \textbf{A:} \texttt{open microwave 1}
  \par
  \textbf{O:}
  You open the microwave 1.
  The microwave 1 is open.
  In it, you see a apple 1, and a potato 1.
  \par
}\\[3pt]
\mbox{}\\

{
  \textbf{A:} \texttt{take apple 1 from microwave 1}
  \par
  \textbf{O:}
  You pick up the apple 1 from the microwave 1.
  \par
}\\[3pt]
\mbox{}\\

{
  \textbf{A:} \texttt{go to fridge 1}
  \par
  \textbf{O:}
  The fridge 1 is open.
  In it, you see a apple 2, a egg 1, a lettuce 1,
  a pan 2, a plate 1, and a tomato 1.
  \par
}\\[3pt]
\mbox{}\\

{
  \textbf{A:} \texttt{put apple 1 in/on fridge 1}
  \par
  \textbf{O:}
  You put the apple 1 in/on the fridge 1.
  \par
}\\[3pt]
\mbox{}\\

{
  \textbf{A:} \texttt{take egg 1 from fridge 1}
  \par
  \textbf{O:}
  You pick up the egg 1 from the fridge 1.
  \par
}\\[3pt]
\mbox{}\\

{
  \textbf{A:} \texttt{put egg 1 in/on fridge 1}
  \par
  \textbf{O:}
  You put the egg 1 in/on the fridge 1.
  \par
}\\[3pt]
\mbox{}\\

{
  \textbf{A:} \texttt{go to microwave 1}
  \par
  \textbf{O:}
  The microwave 1 is open.
  In it, you see a potato 1.
  \par
}\\[3pt]
\mbox{}\\

{
  \textbf{A:} \texttt{take potato 1 from microwave 1}
  \par
  \textbf{O:}
  You pick up the potato 1 from the microwave 1.
  \par
}\\[3pt]
\mbox{}\\

{
  \textbf{A:} \texttt{put potato 1 in/on microwave 1}
  \par
  \textbf{O:}
  You put the potato 1 in/on the microwave 1.
  \par
}\\[3pt]
\mbox{}\\

{
  \textbf{A:} \texttt{examine microwave 1}
  \par
  \textbf{O:}
  The microwave 1 is open.
  In it, you see a potato 1.
  \par
}\\[3pt]
\mbox{}\\

{
  \textbf{A:} \texttt{go to fridge 1}
  \par
  \textbf{O:}
  The fridge 1 is open.
  In it, you see a apple 2, a apple 1, a egg 1,
  a lettuce 1, a pan 2, a plate 1, and a tomato 1.
  \par
}\\[3pt]
\mbox{}\\

{
  \textbf{A:} \texttt{take egg 1 from fridge 1}
  \par
  \textbf{O:}
  You pick up the egg 1 from the fridge 1.
  \par
}\\[3pt]
\mbox{}\\

{
  \textbf{A:} \texttt{put egg 1 in/on fridge 1}
  \par
  \textbf{O:}
  You put the egg 1 in/on the fridge 1.
  \par
}\\[3pt]
\mbox{}\\

{
  \textbf{A:} \texttt{go to microwave 1}
  \par
  \textbf{O:}
  The microwave 1 is open.
  In it, you see a potato 1.
  \par
}\\[3pt]
\mbox{}\\

{
  \textbf{A:} \texttt{close microwave 1}
  \par
  \textbf{O:}
  You close the microwave 1.
  \par
}\\[3pt]
\mbox{}\\

{
  \textbf{A:} \texttt{go to fridge 1}
  \par
  \textbf{O:}
  The fridge 1 is open.
  In it, you see a apple 2, a apple 1, a egg 1,
  a lettuce 1, a pan 2, a plate 1, and a tomato 1.
  \par
}\\[3pt]
\mbox{}\\

{
  \textbf{A:} \texttt{take egg 1 from fridge 1}
  \par
  \textbf{O:}
  You pick up the egg 1 from the fridge 1.
  \par
}\\[3pt]

\end{longtable}

\endgroup

\end{document}